\newtheorem{definition}{Definition}
\theoremstyle{definition}
\newtheorem{remark}{Remark}
\newcommand*{\rom}[1]{\expandafter\@slowromancap\romannumeral #1@}
\title{Machine-Learned Sampling of Conditioned Path Measures}
\author{
Qijia Jiang\thanks{UC Davis, \texttt{qjang@ucdavis.edu}}
\and
Reuben Cohn-Gordon\thanks{UC Berkeley, \texttt{reubenharry@gmail.com}}
}  
\date{\today}
\begin{document}

\maketitle

\begin{abstract}
We propose algorithms for sampling from posterior path measures $\mathcal{P}(\mathcal{C}([0, T], \mathbb{R}^d))$ under a general prior process. This leverages ideas from (1) controlled equilibrium dynamics, which gradually transport between two path measures, and (2) optimization in $\infty$-dimensional probability space endowed with a Wasserstein metric, which can be used to evolve a density curve under the specified likelihood. The resulting algorithms are theoretically grounded and can be integrated seamlessly with neural networks for learning the target trajectory ensembles, without access to data.
\end{abstract}

% often reduce to learning a drift term
% gradient flow

\section{Introduction}
\label{sec:framework}

Simulating trajectories through a space, according to a given distribution, is a problem which appears in a broad range of scientific applications.
A general formulation of the problem is given by considering a prior path measure that admits a representation as a stochastic differential equation (SDE):
% We start with the most general setup for the problem under consideration: given a 
\begin{equation}
\label{eqn:prior_sde}
dX_t=u_t^{\text{ref}}(X_t)dt+\sqrt{2}dW_t, X_0\sim\rho_0  \quad \text{Denote this path measure as } (\mathbb{P}_t^{\text{ref}})_{t\leq T}\, ,
\end{equation}
where the $u_t^{\text{ref}}$ part of the drift is assumed known (e.g., with $u_t^{\mathit{ref}}=-\nabla V_t$, for $V_t$ a double-well potential). Suppose we then observe, at multiple time points $t_1,\dots,t_K$, potentially noisy observations of the dynamics of the form 
\[y_k=x_{t_k}+\sigma z_k, \quad k=1,\dots, K\,,\]
% were $z_k$ are drawn from standard normal distributions, then our goal is to 
which is an additive Gaussian noise model. Given such a prior and data $\{y_k\}_{k}$, the goal is to
% : given the data $\{y_k\}_{k}$ at multiple time points, and a prior on the process as in \eqref{eqn:prior_sde}, 
infer the posterior path measure $q$, which is known to take the form \cite{raginsky2024variational}:
% and/or the corresponding drift $b$:
\begin{equation}
\label{eqn:posterior_sde}
dX_t= [u_t^{\text{ref}}(X_t)+b_t(X_t)] dt+\sqrt{2}dW_t, X_0\sim\rho_0\quad \text{Denote this path measure as } (q_t)_{t\leq T}\, .
\end{equation}
Equivalently, given methods to sample from $(\mathbb{P}_t^{\text{ref}})_{t\leq T}$, we wish to design algorithms to sample from $(q_t)_{t\leq T}$. Such problems find applications in trajectory inference in biology \cite{chizat2022trajectory}, time series Bayesian analysis, among many others. Notably, this is a generalization of Transition Path Sampling (TPS) from chemistry \cite{bolhuis2021transition}, since one can simply take $t_K=T$ to be a single observation at time $t=T$ with $y_K=B$, $\rho_0=\delta_A$ and $\sigma$ small, i.e., we aim to sample from the ``bridge" $\mathbb{P}(x_{T-h},x_{T-2h},\dots,x_h\vert x_0=A, x_T=B)$ in TPS with $J(x;y)\approx \delta(x_T\neq B)$.

The fact that under mild regularity conditions, the posterior path measure (conditioned on the observations $\{y_k\}$) can always be represented with an additional drift $b_t^\theta$ \cite{raginsky2024variational} means that it can be understood as a ``controlled diffusion process''. Thus, the task amounts to inferring the drift of the posterior SDE given a prior, assuming both \eqref{eqn:prior_sde}, \eqref{eqn:posterior_sde} have the same initial $\rho_0$ that is perfectly known.

Mathematically, we write the conditional density in path space for $\{x(t)\}_{t\leq T}$ as \cite[Section 4.1]{apte2007sampling}
 \begin{equation}
 \label{eqn:Q_star}
 Q(x) \propto \rho_0(x_{t_0})\exp(-I(x)-J(x;y))
 \end{equation}
 where $I(x)$ is the Onsager-Machlup action functional for the unconditional density of path
 \[I(x)=\frac{1}{2}\int_0^T \left(\frac{1}{2}\left\|\frac{dX_t}{dt}-u_t^{\text{ref}}(X_t)\right\|_2^2+\frac{1}{2}\nabla\cdot u_t^{\text{ref}}(X_t)\right) dt\]
 % this provides an expression for the probability of paths
and $J(x;y)$ is the likelihood term
\begin{equation}
\label{eqn:J_functional}
J(x;y)=\frac{1}{2\sigma^2}\sum_{k=1}^K \|y_k-x_{t_k}\|_2^2=\frac{1}{2\sigma^2}\int_0^T \sum_{k=1}^K \|y_k-x_t\|_2^2 \cdot \delta(t-t_k)\,  dt =:\int_0^T J_t(x_t)\, dt\, .
\end{equation}
Above, the observations $y$ parameterize the posterior $Q(x)$ we are interested in sampling from, and it is a \emph{global} conditioning. We assume that (1) the observation at time $t_k$, $y_k$, depends only on $x_{t_k}$, and not other time points; (2) $(\mathbb{P}_t^{\text{ref}})_{t\leq T}$ in \eqref{eqn:prior_sde} is a Markov process throughout the paper. Note that the resulting quantity of interest here is not a deterministic trajectory but rather a probability measure over path ensembles. Maximizing the functional $Q(x)$ will give the most probable path under observations. %Relation to filtering / smoothing.

\paragraph{Contributions} 
In this work, we are interested in identifying the optimal $b_t^*$, which provides a convenient way to draw additional samples / infer the trajectory, rather than just producing a set of sampled trajectories from the target. 
In particular, we explore novel ways to adapt recent ML ideas from $\mathcal{P}(\mathbb{R}^d)$ sampling to $\mathcal{P}(\mathcal{C}([0,T],\mathbb{R}^d))$ sampling. Our two proposals are presented in Section \ref{sec:controlled_path_measure} and \ref{sec:WGF}. Although both methods rely on the posterior nature of the measure rather than just sampling from a generic path measure, they take different viewpoints on the problem. Thus, we expect them to perform well under different likelihood models, and are suitable for complementary use cases. Experimental results can be found in Appendix~\ref{sec:numerics}.  
% Building on recent work focused on neural network based approaches to sampling from $\mathbb{R}^n$, we propose two methods to sample on paths which take advantage of the fact that this amounts to learning a controlled diffusion.

%\vspace{-0.4cm}
\begin{figure}[H]
\label{fig:paths}
    \centering
    \includegraphics[width=0.3\linewidth]{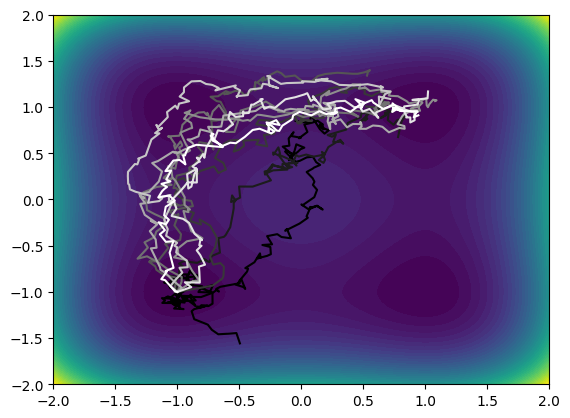}
    \includegraphics[width=0.3\linewidth]{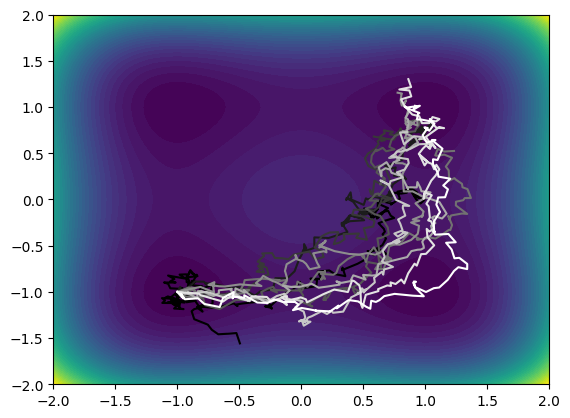}
    \caption{On a potential landscape $V(x) = 5(x_1^2-1)^2 + 5(x_2^2-1)^2$, we condition on the final point of the path being in the top right potential well, and the midpoint being in the bottom right well or top left well. We use Algorithm~\ref{alg:transport_b} from Section \ref{sec:controlled_path_measure}, varying $s$ from $0.0$ (fully black path) to $1.0$ (fully white path) in increments of $0.1$.}
\end{figure}
%\vspace{-0.4cm}

% and fit on the order of $T$ score-matching diffusion models. $\infty$-dimensional 

% Upon reaching an optimal ρt, our primary object of interest is the dynamics model corresponding to ρ˙t and parameterized by the optimal sρ˙t, which may be used to transport particles or predict individual trajectories.

% We summarize our contribution as follows:
% \begin{itemize}
% \item For a general Bayesian posterior sampling over the path space, we 
% \item Compared to some of the SB-specific approaches mentioned in Section~\ref{sec:previous_work}, our framework is more general 
% \end{itemize}

\section{Related Work}
\label{sec:previous_work}
With regards to the infinite-dimensional path-space sampling problem on $\mathcal{C}([0,T],\mathbb{R}^d)$, there exists a longstanding body of research concerning at least two different types of methods.

\paragraph{Existing Method 1: Langevin Stochastic PDE} With the target $Q(x)$ defined as above, one can write down a Langevin SPDE on $x(t,s)$:
\begin{equation}
\label{eqn:langevin-spde}
\frac{\partial x(t,s)}{\partial s}=\delta_x \log Q(x)+\sqrt{2}\frac{\partial W}{\partial s} \quad \text{for} \;(s,t)\in (0,\infty)\times (0,T)
\end{equation}
where $s$ is the algorithmic time (we expect as $s\rightarrow\infty$ relying on ergodicity of the Markov chain it should converge to the desired equilibrium $Q(x)$) and $t\in[0,T]$ is the physical time that simply index the path. The variational derivative $\delta_x \log Q(x)$ can be worked out analytically (as done in \cite{apte2007sampling}). Together with appropriate boundary conditions, this can serve as a sampler for the problem (assuming observation at endpoint $T$):  % -(\frac{1}{2}\nabla f(x)-\frac{1}{2}\nabla f(x)^\top) \frac{dx_t}{dt}
%\begin{equation}
%\begin{aligned}
\begin{align}
\delta_x \log Q(x) &= \frac{1}{2}\frac{d^2 x_t}{dt^2}-\frac{1}{2}u(x)\nabla u(x)-\frac{1}{2}\nabla (\nabla\cdot u(x))+\sum_{k=1}^K \frac{1}{\sigma^2} (y_k-x(t_k,\cdot)) \nonumber\\
\frac{\partial x(0,s)}{\partial t}&-u(x(0,s))+2\nabla \log \rho_0(x(0,s))=0,\quad t=0 \label{eqn:spde_general}\\
\frac{\partial x(T,s)}{\partial t}&-u(x(T,s))-\frac{2}{\sigma^2}[y_K-x(T,s)]=0, \quad t=T \nonumber
\end{align}
%\end{aligned}
%\end{equation}
There have been attempts to pursue different variants of MCMC methods from Stuart et al. \cite{stuart2004conditional, beskos2008mcmc} (possibly with MH adjustment), with a connection to data assimilation generally.

\paragraph{Existing Method 2: learning $b_t(X_t)$} Rather than working directly with paths, which can be memory-intensive, another approach is to directly learn the additional drift $b_t(X_t)$ in \eqref{eqn:posterior_sde}, which in fact takes a gradient form. This often reduces to an optimal control problem or alternatively a high-dimensional PDE problem. It is known, since the posterior is a reweighting of the prior path measure, the drift admits closed-form in a path integral representation \cite[Theorem 1]{raginsky2024variational}:
\begin{equation}
\label{eqn:path_integral_control}
b_t(z)=2\nabla \log \mathbb{E}_{x\sim\mathbb{P}^{\text{ref}}}\left[\exp\left(-\int_t^T J_s(x_s) ds\right) \Big| x_t=z\right] =:-\nabla v_t(z)
\end{equation}
for the conditioned SDE, where the expectation is taken over the prior SDE with drift $u_t^{\text{ref}}$. This is something one may hope to simulate to estimate and recovers Doob's $h$-transform $2\nabla_x\log p_{T|t}(X_T=B|X_t=x)$ when we only observe at time $T$ \cite{pidstrigach2025}. But since the quantity under estimation is typically a rare-event under the reference measure, even with importance weights, it is generally hard to do well; many paths will have negligible weights and the relevant paths are exponentially rare. There is a PDE one can write down for $v_t$ in \eqref{eqn:path_integral_control} as well using Feynman-Kac:
\begin{equation}
\label{eqn:HJB}
\frac{\partial}{\partial t} v_t(z) + u_t^{\text{ref}}(z)^{\top} \nabla v_t(z)+\Delta v_t(z)+J_t(z)= \|\nabla v_t(z)\|^2
\end{equation}   % we don't necessarily have terminal constraint g
but it isn't clear that solving a high-dimensional PDE would be any easier. Some other related work along these lines (with possibly marginal variational approximation) include \cite{holdijk2024stochastic, du2024doob, sutter2016variational}. A method based on score-matching for learning this drift has also been proposed \cite{heng2021simulating}.
%turns it into stochastic optimal control problem with appropriate importance weights on the loss/gradient but I don't think it's exactly solving ...

In the case of TPS, this is a bona-fide Schr\"odinger bridge (SB) problem with two delta terminal constraints, for which one can do path-space Sinkhorn / IPF by iteratively minimizing $\text{KL}(\overrightarrow{\mathbb{P}}^{\delta_A, u}\Vert\overleftarrow{\mathbb{P}}^{\delta_B, v})$ over the two slots alternatively. It amounts to setting $u_0=u^{\text{ref}}$, and perform a sequence of score-matching time reversals to learn the drifts $u^*,v^*$. However, this is not as general as our setup, and methodology from \cite{peluchetti2023diffusion} that is also based on an alternating projection for SB is difficult to generalize when $u^{\text{ref}}\neq 0$ since simulating from the conditional bridge would be challenging. We note that compared to the series of work on flow matching \cite{tong2023improving}, which aim to match the time marginals for a prescribed path (often set up as mixtures of diffusion processes), we aim for the more ambitious goal of sampling from the true target path measure.

Crucially, our setup does not assume access to data, which distinguishes it from the proposals in \cite{triplett2025diffusion} that leverage diffusion model for conditional sampling from a function-valued probability density over $\{X_t\}_t$.

% \begin{remark}
% % Match mixture of diffusions via Doob's $h$-transform, but \cite{peluchetti2023diffusion} and related flow matching methods only match the time marginals. 
% Denoising Diffusion is another option for this problem: need to be able to sample from the likelihood and the prior to implement \cite[Section 4.6]{lim2023score}. 
% \end{remark}

% We expect the second approach may have difficulty dealing with jumps in the likelihood term $J$ in the sparse observation setting

\section{Controlled Transport from Prior to Posterior}
\label{sec:controlled_path_measure}

\subsection{Tracking $b_t$}
\label{sec:track_b_transport}

Building on \cite{nusken2024stein} and \cite{albergo2024nets}, we introduce a sequence of path measures as
    \[\pi_s(x) = Z_s^{-1}\cdot(\rho_0(x_{t_0})\exp(-I(x)-s\cdot J(x;y)))\quad s\in[0,1]\]
    where $Z_s=\int \rho_0(x_{t_0})\exp(-I(x)-s\cdot J(x;y)) dx$. It is clear from the definition that $\pi_s$ interpolate between the prior and the posterior. Similarly, we introduce a family of SDEs parametrized as 
    \[dX_t^s = b_t^s(X_t^s)dt +\sqrt{2}dW_t,\; X_0^s\sim\rho_0 \;\; \text{and}\;\; b_t^0=u_t^{\text{ref}}\]
    inducing corresponding path measures 
    \[\rho_s(x)=Y_s^{-1}\cdot\left[\rho_0(x_{t_0})\exp\left(-\frac{1}{2}\int_0^T \left(\frac{1}{2}\left\|\frac{dx_t}{dt}-b_t^s(x_t)\right\|_2^2+\frac{1}{2}\nabla\cdot b_t^s(x_t)\right) dt\right)\right]\, .\]
    If we can learn the drift change in $b$ at each step $s$ as the solution of (holding for all path $x$)
    \begin{align}
    &\frac{\partial \pi_s(x)}{\partial s}=\pi_s(x)\left(-J(x;y)+\mathbb{E}_{\pi_s(x)}[J(x;y)]\right)\label{eqn:FR}\\
    &\overset{!}{=} \frac{\partial \rho_s(x)}{\partial s}=\rho_s(x)\left(h_s(x)-\mathbb{E}_{\rho_s(x)}[h_s(x)]\right)\nonumber\\
    &\text{where } h_s(x)=-\frac{1}{2}\int_0^T \left(b_t^s(x_t)-\frac{dx_t}{dt}\right)^\top \frac{\partial b_t^s}{\partial s}(x_t)+\frac{1}{2}\nabla\cdot \frac{\partial b_t^s}{\partial s}(x_t)\, dt\nonumber
    \end{align}
    then the resulting drift $\{b_t^s\}_t$ will follow the path measure $\pi_s(x)$ for all $s\in[0,1]$ -- thus \emph{exactly} hitting target at time $s=1$. Unlike MCMC \eqref{eqn:langevin-spde}, this is ``controlled", which means we don't need to run until $s=\infty$.
Operationally, one starts by drawing samples from the prior $\pi_0(x)$, estimate the expectation in 
\begin{equation}
\label{eqn:consistency}
h_s(x)-\mathbb{E}_{\rho_s(x)}[h_s(x)]=-J(x;y)+\mathbb{E}_{\pi_s(x)}[J(x;y)]
\end{equation}
using empirical samples, and solve for $\partial b^s/\partial s$ on the LHS to update the drift. This is followed by re-simulate paths using this new drift 
\[b^{s+\delta s} \approx b^s+(\partial b^s/\partial s)\cdot \delta s\] 
for again imposing the consistency equation $\pi_{s+\delta_s}(x)=\rho_{s+\delta_s}(x)$ in \eqref{eqn:consistency} and repeat until $s\approx 1$ since $\pi_1(x)=Q(x)$. Note that estimating \eqref{eqn:consistency} does not require knowing the normalizing constant. In practice, one can parametrize $\{\partial b_t^s/\partial s\}_t$ as a Neural Network taking $(x_t^s,t,s)\in \mathbb{R}^d\times [0,T]\times [0,1]$ as input and outputting a $\mathbb{R}^d$-valued vector.

To correct for the possible lag $\rho_s\neq \pi_s$, one may optionally simulate the trajectories using the current drift $b^s$, followed by a few steps of MCMC w.r.t the density $\pi_s$, after which we learn the update $\partial b^s/\partial s$ using the latest trajectories, and repeat -- this is somewhat similar to Algorithm 2 from \cite{nusken2024stein} where SVGD is used in between for $\mathcal{P}(\mathbb{R}^d)$ sampling.

The intuition behind this approach is that it is easier to conceptualize the algorithmic dynamics on the path measure space $q^{n}\rightarrow q^{n+1}\rightarrow\dots \rightarrow q^*$ and work out the corresponding drift update for the SDE $b^n\rightarrow b^{n+1}\rightarrow\dots\rightarrow b^*$ for implementation purpose. At a high level, this is a transport idea in the path measure setting. 
%
% \cite{stoltz2010free} gave wonderful exposition on the subject of free-energy calculation. 
\paragraph{Path-space Normalizing Constant Estimator} With the $\rho_s=\pi_s$ interpolation method above (i.e., maintaining equilibrium dynamics w.r.t $\pi_s\; \forall s\in[0,1]$), one can consider thermodynamic integration in path space and calculate the partition function for $Q$ as
\begin{align*}
\frac{\partial Z_s}{\partial s}&=\int -J(x;y)\rho_0(x_{t_0})\exp(-I(x)-s\cdot J(x;y)) dx\\
&= \frac{\int -J(x;y)\rho_0(x_{t_0})\exp(-I(x)-s\cdot J(x;y)) dx}{\int \rho_0(x_{t_0})\exp(-I(x)-s\cdot J(x;y)) dx} Z_s=\mathbb{E}_{\pi_s}[-J(x;y)] \cdot Z_s
\end{align*}
therefore $Z_T=Z_0\cdot \exp\left(\int_0^T \mathbb{E}_{\pi_s}[-J(x;y)]\, ds\right)$ assuming the initial $Z_0$ is analytically tractable. Quantity of such type is often of interest in chemistry applications. The end-to-end method is summarized in Algorithm \ref{alg:transport_b} (c.f. Appendix~\ref{app:pseudocode}).

\subsection{Kernel Method}
\label{sec:kernel}

% We considered in \eqref{eqn:FR} in some sense Fisher-Rao for designing the $q$-space dynamics but others can be considered as well. 

Without NN, we can solve \eqref{eqn:consistency} with Kernel Ridge Regression (KRR) by assuming $b$ belongs to a RKHS $\mathcal{H}_k$. The benefit of this is that everything now becomes closed-form. For each $s$, denote $v^s(x_t,t) := \frac{\partial b^s}{\partial s}(x_t,t)$ as a vector field mapping $\mathbb{R}^d\times\mathbb{R}$ to $\mathbb{R}^d$, where we have an extra $t\in[T]$ dimension compared to \cite{nusken2024stein} because we are working in path-space. Introduce the linear operator $\mathcal{S}$ mapping the function $v^s$ and path $x^i\in\mathcal{C}([0,T],\mathbb{R}^d)$ to a scalar $h_s(x^i)-\frac{1}{N}\sum_{j=1}^N h_s(x^j)$. Assuming $N$ trajectories, at each $s\in [0,1]$, one solves
\begin{equation}
\label{eqn:KRR}
\arg\min_{v\in \mathcal{H}_k}\; \frac{1}{N}\sum_{i=1}^N (\mathcal{S}(v^s)(x^i)-j(x^i))^2+\lambda\|v^s\|_{\mathcal{H}_k}^2
\end{equation}
by letting $j(x^i):=-J(x^i)+\frac{1}{N}\sum_{j=1}^N J(x^j)$ that takes a path $x^i\in\mathcal{C}([0,T],\mathbb{R}^d)$ as input and outputs a scalar.
\begin{restatable}[Kernel Method]{lemma}{KRR}
\label{lem:KRR}
KRR method for solving \eqref{eqn:consistency} with empirical samples, written in \eqref{eqn:KRR} for each $s \in[0,1]$, can be implemented analytically by solving a linear system of size $N\times N$ at each $t \in [T]$.
\end{restatable}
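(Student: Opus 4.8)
The plan is to reduce \eqref{eqn:KRR}, for a fixed algorithmic time $s$ (whose dependence I suppress) and a fixed physical time $t$ in the chosen discretization of $[0,T]$, to a finite-dimensional ridge-regression problem whose solution is read off in closed form. The starting observation is that the entire dependence of the objective in \eqref{eqn:KRR} on the unknown $v^s$ is \emph{linear}: inspecting the definition of $h_s$, each contribution $(b_t^s(x_t)-\dot x_t)^\top v^s(x_t,t)$ and $\tfrac12\nabla\cdot v^s(x_t,t)$ is linear in $v^s$, and the empirical-mean subtraction built into $\mathcal S$ preserves linearity, so $v^s\mapsto \mathcal S(v^s)(x^i)$ is a finite linear combination of point evaluations of $v^s$ and of its spatial divergence along the $N$ sampled trajectories. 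Provided the kernel $k$ is chosen at least $C^1$ (indeed $C^2$) in its spatial argument, the maps $v\mapsto v_\ell(z)$ and $v\mapsto \partial_i v_\ell(z)$ are bounded functionals on $\mathcal H_k$ with explicit Riesz representers $k(\cdot,z)e_\ell$ and $\partial_i k(\cdot,z)e_\ell$; hence each $L_i:=\mathcal S(\cdot)(x^i)$ is a bounded linear functional on $\mathcal H_k$.

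Next I would discretize the $t$-integral in $h_s$ (and in $J$) and treat the consistency requirement slice-by-slice in $t$, so that for each grid time $t$ one is solving a vector-valued KRR over $v^s(\cdot,t)\in\mathcal H_k$ with the $N$ residuals $\mathcal S(v^s)(x^i)-j(x^i)$ — exactly the $\mathcal P(\mathbb R^d)$ construction of \cite{nusken2024stein}, but carrying the extra index $t$. By the representer theorem for RKHS losses that depend on the function only through finitely many bounded linear functionals (the variant that accommodates the derivative functionals arising from the divergence term), the minimizer has the finite expansion $v^s(\cdot,t)=\sum_{i=1}^N \alpha_i^{(t)}\,\xi_i^{(t)}$, where $\xi_i^{(t)}\in\mathcal H_k$ is the Riesz representer of the time-$t$ part of $L_i$ — a concrete linear combination of the $k(\cdot,x^j_t)e_\ell$ and $\partial_\cdot k(\cdot,x^j_t)e_\ell$ over the $N$ samples, with coefficients read off from $b^s$, $\dot x^j$, and the averaging weights in $\mathcal S$.

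Substituting this ansatz into \eqref{eqn:KRR} turns it into a convex quadratic in $\alpha^{(t)}\in\mathbb R^N$: the data term becomes $\tfrac1N\|\mathbf K^{(t)}\alpha^{(t)}-\mathbf j^{(t)}\|_2^2$ and the penalty becomes $\lambda\,(\alpha^{(t)})^\top \mathbf K^{(t)}\alpha^{(t)}$, where $\mathbf K^{(t)}_{ii'}=L_i(\xi_{i'}^{(t)})=\langle\xi_i^{(t)},\xi_{i'}^{(t)}\rangle_{\mathcal H_k}$ is the $N\times N$ symmetric positive-semidefinite Gram matrix assembled from $k$ and its first and second spatial derivatives evaluated at $\{x^i_t\}_{i=1}^N$, and $\mathbf j^{(t)}_i=j(x^i)$. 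Setting the gradient to zero yields the normal equations $(\mathbf K^{(t)}+\lambda N\,\mathbf I)\alpha^{(t)}=\mathbf j^{(t)}$, a single $N\times N$ linear system at each $t$, uniquely solvable for $\lambda>0$; plugging $\alpha^{(t)}$ back gives $v^s(\cdot,t)=\partial b^s/\partial s(\cdot,t)$ in closed form. This is precisely the claimed analytic implementation.

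I expect the main difficulty to be bookkeeping rather than conceptual: the divergence term makes $\mathcal S$ involve \emph{derivative} evaluations, so one must invoke the $C^2$ reproducing identities and carefully track which derivative of $k$ acts in which slot when forming the $\xi_i^{(t)}$ and hence $\mathbf K^{(t)}$ — this is where sign or index errors are most likely to enter, and where one should double-check that $\mathbf K^{(t)}$ is genuinely symmetric PSD so that the ridge system is well posed. The one genuinely structural point to pin down is the $t$-decoupling: one has to justify that imposing \eqref{eqn:consistency} grid-point-wise (equivalently, parametrizing $v^s$ with a kernel whose time component is diagonal on the discretization) is what is meant, so that the global problem block-diagonalizes into $|[T]|$ independent $N\times N$ systems rather than a single larger coupled one; if instead a single scalar residual per path is used, the same argument still produces one $N\times N$ system, with the $t$-dependence surfacing only in the assembly $\mathbf K=\sum_t\mathbf K^{(t)}$, so the $N\times N$ size is robust either way.
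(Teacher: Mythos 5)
Your proposal is correct and follows essentially the same route as the paper: exploit linearity of $\mathcal S$ in $v$, represent the point-evaluation and divergence functionals via the kernel and its derivatives (the paper does this by computing the adjoint $\mathcal S_t^*$ explicitly, which is exactly your Riesz representer $\xi_i^{(t)}$), and solve the resulting $N\times N$ ridge system $v_t^*=\mathcal S_t^*(\lambda I+\mathcal S_t\mathcal S_t^*)^{-1}j_t$ per time slice, with the Gram matrix $\mathcal S_t\mathcal S_t^*$ matching your $\mathbf K^{(t)}$. Your concerns about the $t$-decoupling and the derivative bookkeeping are the right ones and are handled the same way in the paper (slice-wise $j_t$, with the $T$ systems coupled only through terms like $dx_t^i/dt$, as the paper's remark notes).
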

\begin{remark}
The way we wrote it, it may seem that the $T$ problems to solve are independent from each other, but they are in fact coupled through terms like $dx_t^i/dt$ in the proof. 
\end{remark}

\subsection{Equilibrium vs. Non-Equilibrium Thermodynamics}
\label{sec:Non-Equilibrium}
Our methods from Section \ref{sec:track_b_transport}-\ref{sec:kernel},  and the alternative ``path-based drifts" discussed in Appendix \ref{app:control_methods}, being based on having the dynamics exactly following the interpolation curve, are equilibrium methods. Alternatively, one can use the annealed gradient $\delta_x \log \pi_s(x)$ from the SPDE, together with MH adjustment, as a time-inhomogeneous transition kernel, followed by \emph{weighting} on the samples to track the $\pi_s$ curve (Algorithm \ref{alg:spde}). Since the MCMC dynamics alone does not impose $\rho_s=\pi_s$ exactly (a single step of MCMC with changing target will result in a delay), these fall in the category of non-equilibrium methods. The result below bears a resemblance to that in \cite[Section 4]{stoltz2007path}, and is a generalization of Jarzynski's equality to path space.

\begin{restatable}[Time-discrete Jarzynski]{lemma}{Jarzynski}
\label{lem:Jarzynski}
The analogue of the Jarzynski equality in discrete time in path space is: for $s\in [0,1]$ along the Algorithm~\ref{alg:spde} dynamics
    \begin{align*}
    \frac{Z_s}{Z_0} &=\mathbb{E}_{(X_t\sim\pi_t(x))_{t\leq s}} \left[\exp \left(-\int_0^s J(X_t) dt \right) \right] \approx \frac{1}{M}\sum_{i=1}^M \underbrace{\exp\left(-\sum_{t=0}^{s/\delta_s}\delta_s \cdot J(X_t^i)\right)}_{\exp(-W^{i,s/\delta_s})}
    \end{align*}
    where we obtain path $X_t$ from $X_{t-\delta_s}$ using a numerical discretization of a SPDE dynamics leaving $\pi_t(x)$ invariant through MH adjustment, with $X_0\sim\pi_0=\mathbb{P}^{\text{ref}}$ as initialization. Setting $s=1$, the average of a statistics $h(\cdot)$ over the path w.r.t the target $Q(x)$ can be computed as 
    \[ \frac{\sum_{j=1}^M h(X_n^j) e^{-W^{j,n}(X^j)} }{\sum_{j=1}^M e^{-W^{j,n}(X^j)} } \]
    on the final set of $M$ trajectories $\{X_n^j\}_{j=1}^M$, with $n=1/\delta_s$ as the number of switching steps.
\end{restatable}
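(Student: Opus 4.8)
The plan is to transplant the classical discrete-time derivation of the Jarzynski/Crooks identity to \emph{paths of paths}: in algorithmic time the evolving object is an entire physical-time trajectory $x\in\mathcal C([0,T],\mathbb R^d)$, and the discretized protocol with step $\delta_s$ produces a Markov chain $X^{(0)},X^{(1)},\dots,X^{(n)}$ of such trajectories, with $n:=s/\delta_s$, $X^{(0)}\sim\pi_0=\mathbb P^{\text{ref}}$, and $X^{(k)}$ obtained from $X^{(k-1)}$ by one MH-adjusted SPDE step whose only relevant property is that its transition kernel $K_k$ leaves $\pi_{s_k}$ invariant, $s_k:=k\delta_s$. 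Write $\gamma_s(x):=\rho_0(x_{t_0})\exp(-I(x)-sJ(x;y))$ for the unnormalized path density, so that $\pi_s=\gamma_s/Z_s$ and, crucially, $-\log\bigl(\gamma_{s_k}(x)/\gamma_{s_{k-1}}(x)\bigr)=(s_k-s_{k-1})J(x;y)=\delta_s\,J(x)$. Consequently the accumulated ``work'' $W^n=\delta_s\sum_k J(X^{(k)})$ is, up to the convention of whether each increment is booked at the pre- or post-kernel configuration (an $O(\delta_s)$ endpoint effect reflected in the index range of the sum), equal to $-\log\prod_k\bigl(\gamma_{s_k}(X^{(k)})/\gamma_{s_{k-1}}(X^{(k)})\bigr)$ --- and this telescoping structure is exactly what the argument exploits.

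The core computation is that telescoping. Using $\int\gamma_{s_k}(x)\,K_k(x,\mathrm dx')=\gamma_{s_k}(\mathrm dx')$, the unnormalized form of $\pi_{s_k}$-invariance, I would write
\[
\mathbb E\bigl[e^{-W^n}\bigr]=\frac1{Z_0}\int\gamma_0(x_0)\prod_{k=1}^{n}\left[\frac{\gamma_{s_k}(x_{k-1})}{\gamma_{s_{k-1}}(x_{k-1})}\,K_k(x_{k-1},\mathrm dx_k)\right]\mathrm dx_0
\]
and integrate out $x_0,x_1,\dots,x_n$ in order: since $s_0=0$ the leading factor collapses to $\gamma_{s_1}(x_0)$; integrating $x_0$ against $K_1$ returns $\gamma_{s_1}(x_1)$; this merges with the $k=2$ ratio into $\gamma_{s_2}(x_1)$; and so on, the final integral being $\int\gamma_{s_n}(x_n)\,\mathrm dx_n=Z_{s_n}=Z_s$. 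Dividing by $Z_0$ yields the discrete identity. The continuous statement is its $\delta_s\to0$ limit at fixed $s$: $\delta_s\sum_k J(X^{(k)})\to\int_0^s J(X_t)\,\mathrm dt$ as a Riemann sum in algorithmic time along the limiting non-equilibrium process, and one exchanges limit with expectation under a mild boundedness / uniform-integrability hypothesis on $J$ along the protocol. (The expectation is over the law of the protocol, not over ``$X_t\sim\pi_t$'' literally; the latter would give $\exp(-\int_0^s\mathbb E_{\pi_t}[J]\,\mathrm dt)$, which differs by Jensen --- consistently with the second-law inequality $\mathbb E[W]\ge-\log(Z_s/Z_0)$, with equality only in the quasi-static regime.)

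For the reweighting formula at $s=1$ (so $n=1/\delta_s$ and $\pi_1=Q$), I would run the \emph{same} telescoping with an extra factor $h(x_n)$ carried through every step; nothing changes until the final integral, which becomes $\int h(x_n)\gamma_1(x_n)\,\mathrm dx_n=Z_1\,\mathbb E_Q[h]$, hence $\mathbb E[h(X_n)e^{-W^n}]=(Z_1/Z_0)\,\mathbb E_Q[h]$. Dividing by $\mathbb E[e^{-W^n}]=Z_1/Z_0$ isolates $\mathbb E_Q[h]$, and replacing both expectations by empirical averages over the $M$ independent protocol runs gives the stated self-normalized estimator, in which $Z_0$ (and every other normalizing constant) cancels.

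I expect the real work to lie not in the algebra but in two places. First, one must make precise that the MH-adjusted SPDE step is a genuine Markov kernel that is \emph{exactly} $\pi_{s_k}$-invariant on the infinite-dimensional space $\mathcal C([0,T],\mathbb R^d)$ --- this is what licenses the ``integrate against $K_k$, recover $\gamma_{s_k}$'' move --- which in function space requires a dimension-robust proposal (preconditioned Crank--Nicolson type) so the acceptance ratio is well defined; since the lemma hypothesizes such a kernel I would record this as a standing assumption and point to the path-space MCMC literature \cite{stuart2004conditional,beskos2008mcmc}. Second, the $\delta_s\to0$ passage requires that the discrete chain converge to a limiting process and that $e^{-W^n}$ be uniformly integrable so that the discrete identity carries to the limit; boundedness of $J$ on the support of the protocol (or a Novikov-type condition) is the natural hypothesis there. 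Everything else --- pinning down the endpoint index in $W^{i,s/\delta_s}$ and confirming that no normalizing constant survives --- is routine bookkeeping.
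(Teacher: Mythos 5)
Your proposal is correct and follows essentially the same route as the paper's proof: the sequential integration of the unnormalized densities against the $\pi_{s_k}$-invariant MH kernels (the paper's inductive use of $\int e^{-V_i(X_{i-1})}P_{\pi_i}(X_{i-1},dX_i)\,dX_{i-1}=Z_i\pi_i(dX_i)$) is exactly your telescoping identity, and the self-normalized estimator for $\mathbb{E}_Q[h]$ follows from the same cancellation of normalizing constants (the paper phrases it as importance sampling with weights $e^{-W^{j,n}}$, you phrase it by carrying $h(x_n)$ through the telescoping --- these are the same AIS-type identity). Your additional remarks on the pre- versus post-kernel bookkeeping of the work increments, the well-posedness of the function-space MH kernel, and uniform integrability in the $\delta_s\to0$ limit go slightly beyond what the paper records but do not change the argument.
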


In practice, we also have to discretize the $X_t$ path along the $[0,T]$ physical time, as shown in Algorithm \ref{alg:spde} in detail. Some drawbacks of this non-equilibrium approach: (1) simulating SPDE with MH in general is computationally intensive; (2) the weights may have high variance depending on how fast $\pi_s$ changes (compared to equilibrium methods, where each path has weight $\approx 1$ and the switching can be faster). %We illustrate these benefits in the experimental section. 
All proofs and additional details for this section can be found in Appendix~\ref{app:control_methods}.

\section{Reformulating Gibbs Variational Principle via Wasserstein Dynamics}
\label{sec:WGF} 
%     The following classical identity over $\mathcal{P}(\mathbb{R}^d)$
% \[-\log\mathbb{E}_{\text{prior} (\beta)} \mathcal{L}_{y,X}(\beta)=\min_{\rho\ll \mathbb{P}_{\text{prior}}(\beta)} \{-\mathbb{E}_\rho [\log\mathcal{L}_{y,X}(\beta)]+\text{KL}(\rho \Vert \mathbb{P}_{\text{prior}} (\beta))\}\]
% has an analogue in the path space:
% \[-\log \mathbb{E}[e^{-f(W)}]=\inf_v\; \mathbb{E}\left[\frac{1}{2}\int\|v_s\|^2 ds+f(W+\int_0 v_s ds)\right]\]
% for any function $f$ on path space. A commonly used example is when $f(\cdot)=-\log g(X_T^u)$ condition on $X_0$. Above the minimizer $\rho^*$ is precisely the posterior when $\mathcal{L}_{y,X}(\beta)$ is the likelihood function. 

There is a variational formulation for the posterior path measure (that can be obtained by generalizing from $\mathcal{P}(\mathbb{R}^d)$) so in our case we can rewrite the problem as 
    \begin{equation}
    \label{eqn:loss_path}    \arg\min_{\mathbb{Q}^\theta:\mathbb{Q}^\theta_0=\rho_0}\;\; \text{KL}(\mathbb{Q}^\theta\Vert Q^*)\quad \Leftrightarrow\quad \arg\min_{\mathbb{Q}^\theta:\mathbb{Q}^\theta_0=\rho_0}\;\; \mathbb{E}_{\mathbb{Q}^\theta}[J(x;y)]+\text{KL}(\mathbb{Q}^\theta\Vert\mathbb{P}^{\text{ref}})
    \end{equation}
    where $Q^*$ is the target from \eqref{eqn:Q_star} and the solution is unique under mild assumptions. Equivalently in terms of the drift, using Girsanov's theorem we have
    \begin{equation}
    \label{eqn:loss_b}
    \arg\min_{b^\theta}\;\; \int_0^T\mathbb{E}_{x_t^\theta\sim q_t^\theta}\left[  J_t(x_t^\theta; y)+\frac{1}{4}\|b_t^\theta(x_t^\theta)\|^2 \right] \, dt \, .
    \end{equation}
While the loss \eqref{eqn:loss_b} can in principle be estimated via empirical samples and used to optimize over $b^\theta$, there are a few challenges: (1) the loss is complicated as a function of the drift $b^\theta$ (even though in terms of $\mathbb{Q}^\theta$ in \eqref{eqn:loss_path} it's convex); (2) one needs to calculate $\partial x_t^\theta/\partial \theta$ generated with SDE \eqref{eqn:posterior_sde} for updating $b^\theta$, which requires back-propagating through the solver. This approach is related to some of the Neural SDE methods for fitting SDEs to data \cite{li2020scalable,opper2019variational}.

Taking a different route based on convex optimization, we first leverage the Benamou-Brenier \cite{benamou2000computational} reformulation to pose \eqref{eqn:loss_path} as joint learning over $(q_t,b_t)_t$, which yields insights that will allow us to simplify further later. For our specific case, it becomes
%\begin{subequations}
\begin{equation}
\begin{aligned}
\label{eqn:BB}
&\min_{q_t,b_t} \;\int_0^T \int_{\mathbb{R}^d} \left[\frac{1}{4}\|b_t(x_t)\|^2  + J_t(x_t; y)\right] q_t(x_t) \, dx dt 
&\text{s.t. } \;\partial_t  q_t = -\nabla\cdot(q_t (b_t+u_t^{\text{ref}}))+\Delta q_t, q_0=\rho_0 %, \rho_T^\theta=\delta_B
\end{aligned}
\end{equation}
%\end{subequations}

Now if one follows the augmented Lagrangian approach in \cite{benamou2000computational} that via reparameterization turns the problem into a convex optimization, the Laplacian $\Delta q_t$ from the diffusion in Fokker-Planck introduces a challenge as we have to deal with a Bilaplacian type of quantity in one of the updates. %They consider deterministic dynamics for $\mathcal{W}_2$ in OT.

\subsection{Approximation}
\label{sec:approximation}
For \eqref{eqn:BB}, there is a way to move the Laplacian term into the objective that is used in various works (e.g., Eqn (4.25) in \cite{chen2021stochastic}). Followed by the similar $(q_t,b_t)\mapsto (q_t,m_t)$ for $m_t=q_t\cdot b_t$ transformation as done in \cite{benamou2000computational}, we end up with our \emph{jointly convex} objective with an extra (relative) Fisher information in the objective but a continuity equation as the constraint.
%     \begin{align}
%     \label{eqn:fisher_objective}
% &\min_{q_t,m_t} \;\int_0^T \int_{\mathbb{R}^d} \left[\frac{1}{2}\left\|\frac{m_t}{q_t}(x_t)\right\|^2 +\frac{1}{2}\|\nabla \log q_t(x_t)\|^2 +J(x_t; y)\right] q_t(x_t) \, dx dt\\
% &\text{s.t. } \;\partial_t  q_t = -\nabla\cdot m_t, q_0=\rho_0 \nonumber 
% \end{align}

%\begin{proposition}
\begin{restatable}[Convex Reformulation]{proposition}{rewrite}
\label{lem:rewrite_BB}
For a general reference $u_t^{\text{ref}}\neq 0$, \eqref{eqn:BB} is equivalent to
\begin{align}
&\min_{q_t,m_t} \;\int_0^T \int_{\mathbb{R}^d} \left[\frac{1}{4}\left\|\frac{m_t}{q_t}(x_t)-(u_t^{\text{ref}}-\nabla \log\mathbb{P}_t^{\text{ref}})(x_t)\right\|^2 +\frac{1}{4}\left\|\nabla \log\frac{q_t}{\mathbb{P}_t^{\text{ref}}}(x_t)\right\|^2 +J_t(x_t)\right] q_t(x_t)  dx dt \nonumber\\
& \quad + \frac{1}{2}\int_{\mathbb{R}^d} q_T(x_T) \log \frac{q_T}{\mathbb{P}_T^{\text{ref}}}(x_T) \, dx_T \nonumber\\
&\text{s.t. } \;\partial_t  q_t = -\nabla\cdot m_t,\; q_0=\rho_0 \label{eqn:relative_fisher_obj}\, .%, \rho_T^\theta=\delta_B 
\end{align}
When the reference process $\mathbb{P}^{\text{ref}}$ is in equilibrium (i.e., $\mathbb{P}^{\text{ref}}_t=\nu\propto e^{-V}\; \forall t$) the problem reduces to
\begin{align}
&\min_{q_t,m_t} \;\int_0^T \int_{\mathbb{R}^d} \left[\frac{1}{4}\left\|\frac{m_t}{q_t}(x_t)\right\|^2 +\frac{1}{4}\left\|\nabla \log\frac{q_t}{\nu}(x_t)\right\|^2 +J_t(x_t)\right] q_t(x_t) \, dx dt+\frac{1}{2}\int_{\mathbb{R}^d} q_T(x_T) \log\frac{q_T}{\nu}(x_T)\, dx_T \nonumber\\
\label{eqn:simplified_objective}
&\text{s.t. }\; \partial_t  q_t = -\nabla\cdot m_t,\; q_0=\rho_0
\end{align}
In this case, $\rho_0=\nu=\mathbb{P}_0^{\text{ref}}$ for the initial distribution. 
%\end{proposition}
\end{restatable}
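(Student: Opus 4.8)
\emph{Plan.} The plan is to recast the Fokker--Planck constraint of \eqref{eqn:BB} as a bare continuity equation by absorbing the diffusion into an effective drift, re-express the kinetic cost in the new variables, and recognize the cross term this produces as a total time derivative of a relative entropy, so that it collapses to the stated terminal penalty.

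\emph{Step 1 (fold the Laplacian into the drift).} Using $\Delta q_t = -\nabla\cdot\big(q_t(-\nabla\log q_t)\big)$, the constraint in \eqref{eqn:BB} reads $\partial_t q_t = -\nabla\cdot(q_t w_t)$ with the current velocity $w_t := b_t + u_t^{\text{ref}} - \nabla\log q_t$; setting $m_t := q_t w_t$ turns it into $\partial_t q_t = -\nabla\cdot m_t$. For fixed $q_t$ the pointwise map $b_t\mapsto m_t$ is an affine bijection on $\{q_t>0\}$ (with the usual conventions $m_t\ll q_t$ and $m_t/q_t=0$ off the support), so minimizing \eqref{eqn:BB} over $(q_t,b_t)$ is the same as minimizing over $(q_t,m_t)$; this is the device (used e.g. in \cite{chen2021stochastic}) that moves the Laplacian into the objective.

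\emph{Step 2 (rewrite the kinetic term).} Write $w_t^{\text{ref}} := u_t^{\text{ref}} - \nabla\log\mathbb{P}_t^{\text{ref}}$ for the current velocity of the reference marginals, so that $\partial_t\mathbb{P}_t^{\text{ref}} = -\nabla\cdot(\mathbb{P}_t^{\text{ref}}w_t^{\text{ref}})$. A direct substitution gives the pointwise identity $b_t = (w_t - w_t^{\text{ref}}) + \nabla\log\frac{q_t}{\mathbb{P}_t^{\text{ref}}}$, hence
\[
\tfrac14\|b_t\|^2 = \tfrac14\big\|w_t - w_t^{\text{ref}}\big\|^2 + \tfrac14\big\|\nabla\log\tfrac{q_t}{\mathbb{P}_t^{\text{ref}}}\big\|^2 + \tfrac12\big\langle w_t - w_t^{\text{ref}},\ \nabla\log\tfrac{q_t}{\mathbb{P}_t^{\text{ref}}}\big\rangle .
\]
Substituting $w_t = m_t/q_t$ and integrating against $q_t\,dx\,dt$, the first two terms reproduce exactly the first two summands of the bulk integrand in \eqref{eqn:relative_fisher_obj}, while the $J_t$ term carries over unchanged.

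\emph{Step 3 (cross term $\to$ boundary term; convexity; equilibrium).} It remains to show $\tfrac12\int_0^T\!\!\int q_t\langle w_t - w_t^{\text{ref}}, \nabla\log\frac{q_t}{\mathbb{P}_t^{\text{ref}}}\rangle\,dx\,dt = \tfrac12\int q_T\log\frac{q_T}{\mathbb{P}_T^{\text{ref}}}\,dx_T$. Differentiating $t\mapsto\text{KL}(q_t\Vert\mathbb{P}_t^{\text{ref}})$ and integrating by parts — once with $\partial_t q_t=-\nabla\cdot(q_t w_t)$, once with $\partial_t\mathbb{P}_t^{\text{ref}}=-\nabla\cdot(\mathbb{P}_t^{\text{ref}}w_t^{\text{ref}})$, together with $\int\partial_t q_t\,dx=0$ and $\nabla(q_t/\mathbb{P}_t^{\text{ref}}) = (q_t/\mathbb{P}_t^{\text{ref}})\nabla\log(q_t/\mathbb{P}_t^{\text{ref}})$ — yields the entropy dissipation identity $\frac{d}{dt}\text{KL}(q_t\Vert\mathbb{P}_t^{\text{ref}}) = \int q_t\langle w_t - w_t^{\text{ref}}, \nabla\log\frac{q_t}{\mathbb{P}_t^{\text{ref}}}\rangle\,dx$; integrating on $[0,T]$ and using $q_0=\rho_0=\mathbb{P}_0^{\text{ref}}$ to kill the $t=0$ contribution gives the claim, so \eqref{eqn:BB} equals \eqref{eqn:relative_fisher_obj}. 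Joint convexity in $(q_t,m_t)$ then follows since $\|m_t/q_t - c_t\|^2 q_t = |m_t|^2/q_t + (\text{linear in } m_t, q_t)$ and $\|\nabla\log(q_t/\mathbb{P}_t^{\text{ref}})\|^2 q_t = |\nabla q_t|^2/q_t + (\text{linear in } q_t)$ are perspective-type convex functionals, $q_T\log(q_T/\mathbb{P}_T^{\text{ref}})$ is convex, $J_t q_t$ is linear, and $\partial_t q_t=-\nabla\cdot m_t$ is a linear constraint. For the equilibrium case $\mathbb{P}_t^{\text{ref}}\equiv\nu\propto e^{-V}$, the reference SDE is overdamped Langevin with $u_t^{\text{ref}}=-\nabla V=\nabla\log\nu$, hence $w_t^{\text{ref}}=0$; substituting $w_t^{\text{ref}}=0$ and $\mathbb{P}_t^{\text{ref}}=\nu$ into \eqref{eqn:relative_fisher_obj} gives \eqref{eqn:simplified_objective}, with $\rho_0=\mathbb{P}_0^{\text{ref}}=\nu$ automatic.

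\emph{Main obstacle.} The delicate point is Step 3: carrying out the two integrations by parts rigorously and discarding the spatial boundary-at-infinity terms requires mild decay/integrability hypotheses on $q_t$, $\mathbb{P}_t^{\text{ref}}$ and their gradients along the flow; one must also verify that the reparametrization in Step 1 identifies the two feasible sets exactly (including the degenerate-support conventions), so the two $\min$ problems genuinely coincide rather than one merely bounding the other.
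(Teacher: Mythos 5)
Your proposal is correct and follows essentially the same route as the paper's proof: reparametrize via the current velocity $w_t=b_t+u_t^{\text{ref}}-\nabla\log q_t$ (the paper's $v_t$) so the Fokker--Planck constraint becomes a continuity equation, expand $\tfrac14\|b_t\|^2$ around the reference current velocity, and identify the cross term with $\tfrac12\frac{d}{dt}\mathrm{KL}(q_t\Vert\mathbb{P}_t^{\text{ref}})$, which integrates to the terminal penalty since $q_0=\mathbb{P}_0^{\text{ref}}$. Your added remarks on joint convexity and on the decay hypotheses needed for the integrations by parts are sensible supplements but do not change the argument.
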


\paragraph{Reformulation in terms of marginal density:} Compared to the dynamic formulation of $\mathcal{W}_2$ optimal transport from \cite{benamou2000computational}, in Lemma~\ref{lem:rewrite_BB} we have an extra (relative) Fisher information and potential energy term involving $J$, and removal of the terminal $\rho_T$ constraint for a penalty on $q_T$. In what follows, we adopt the following approximation, which crucially only involves the evolution of a \emph{single} variable, instead of two. Since the objective \eqref{eqn:simplified_objective} is (jointly) convex and (almost) separable over time $t\in[0,T]$ we propose to solve a sequence of ``Wasserstein Gradient Flow"/JKO steps between adjacent marginals as (initialized at $q_0=\rho_0$)
\begin{equation}
\label{eqn:WGF}
q_{t+h}\leftarrow \arg\min_{q\in \mathcal{P}(\mathbb{R}^d)}\; \frac{1}{2h^2} \mathcal{W}_2^2(q,q_{t})+F(q)
\end{equation}
for advancing the solution over $t\in [0,T]$ with stepsize $h$, where $F(q)$ is the convex relative Fisher information functional $I(q;\nu):=\frac{1}{2}\int \|\nabla \log q(x)/\nu (x)\|^2 q(x) dx$ + term linear in $q: \frac{2}{h} \int J_{t+h}(x)q(x) dx$ . 
To see one can work with $\{q_t\}_t$ only, observe that in \eqref{eqn:simplified_objective}, the drift term $v_t$ (or equivalently $m_t$) only appears in the first part, and the rest is a function of the density $q_t$ only. Therefore given a curve of optimal measure $\{q_t\}_{t\leq T}$ one can solve classical OT between adjacent pairs to get the corresponding vector fields $\{v_t\}_t$ that follow the trajectory with \emph{minimum} kinetic energy (hence divergence-free). This is an Action Matching problem \cite{neklyudov2023action} if provided with samples from the $\{q_t\}_{t\leq T}$ curve.

Conceptually, this is a dynamical system perspective on the problem as a sequence of transformations
\[q_0\xrightarrow{b_h}q_h\rightarrow\cdots\rightarrow q_{T-h}\xrightarrow{b_T}q_T\,,\]
and we study the joint distribution over $\mathcal{P}(\mathbb{R}^d\times\mathbb{R}^d)$ at a time, as the Markovian representation \eqref{eqn:posterior_sde} suggests. This reduces the problem from a joint $T \times d$ distribution to a sequence of $2d$ problems. Unlike the more global method in Section \ref{sec:track_b_transport} that repeatedly updates the path measure $\mathcal{P}(\mathcal{C}([0, T], \mathbb{R}^d))$, it is a one-pass method evolving $\mathcal{P}(\mathbb{R}^d)$ from $t=0$ to $t=T$, local in nature. Alternative approximations on \eqref{eqn:simplified_objective} and more justification for our approach based on \eqref{eqn:WGF} are discussed in Section~\ref{sec:entropic_OT}.

We view the methods in the following sections as more principled than simply writing out the Lagrangian for \eqref{eqn:BB} and fitting $2$ NNs to solve a min-max problem over the $2$ variables across $t\in [0,T]$. This is the approach adopted in \cite{neklyudov2023computational} where gradient descent/ascent is done on the $\phi_t,q_t$ functions (solution of 2 coupled PDEs), subject to (multi-)marginal constraints: specializing to the case of SB it is (c.f. eqn (28) therein)
\begin{equation}
\label{eqn:AM}
\inf_{q_t\in\Pi(\rho_0,\rho_1)} \sup_{\phi_t}\quad \int\phi_1(x)\rho_1(x) dx-\int\phi_0(x)\rho_0(x) dx-\int_0^1\int \left(\frac{1}{2}\|\nabla\phi_t\|^2+\frac{1}{2}\Delta \phi_t+\partial_t \phi_t\right)q_t\, dx dt\, ,
\end{equation}
which can be viewed as a PINN regularizer for the dynamical dual formulation of SB:
\[\min_{Q_0=\rho_0,Q_1=\rho_1} \text{KL}(Q\Vert W) = \sup_{\phi_t} \; \int\phi_1(x)\rho_1(x) dx-\int\phi_0(x)\rho_0(x) dx\quad\text{s.t.}\; \frac{1}{2}\|\nabla\phi_t\|^2+\frac{1}{2}\Delta \phi_t+\partial_t \phi_t = 0\]
with collocation points drawn from $q_t$. Operationally, \eqref{eqn:AM} amounts to a series of Action Matching and WGF updates, with both $\phi,q$ parametrized as NNs. The re-writing, although involving only $1$ variable, becomes a PDE-constrained optimization that is generally difficult to solve.
% This re-writing suggests if one is only interested in identifying the vector field, the optimization / evaluation against $\rho_t$ is not strictly necessary (i.e., objective can be used off-policy). 
% Similar AM objective is proposed in \cite{albergo2024nets} where a regularizer on the Hamilton-Jacobi equation is used (and one gets OT solution instead of eOT). 

\subsection{Low Dimensional Case: Composite Optimization for \eqref{eqn:relative_fisher_obj}}
\label{sec:low_dim_eulerian}

For the most general case \eqref{eqn:relative_fisher_obj}, since the resulting optimization is jointly convex in the variables -- the system in \eqref{eqn:relative_fisher_obj} should be contrasted with Theorem 1 in \cite{du2024doob} -- one may discretize the space and apply convex optimization methods on the resulting objective.

The formulation in \eqref{eqn:relative_fisher_obj} requires $\nabla \log \mathbb{P}_t^{\text{ref}}$ for the reference process, which we typically don't know analytically. But assuming (1) one can sample from the prior trajectories; (2) $u_t^{\text{ref}}=-\nabla V_t$ takes gradient form, an Action Matching loss \cite[Algorithm 1]{neklyudov2023action} can be used to learn this, where we expect $\nabla s_t^*(x)\approx -\nabla V_t(x)-\nabla\log\mathbb{P}_t^{\text{ref}}(x)$. In the case of TPS with the Brownian bridge as reference, we have $\nabla\log\mathbb{P}_t^{\text{ref}}(x_t) = -\frac{x_t-A}{2t}$ 
analytically available. % this is something the min-max approach has to estimate as well
% Can one get away with estimating the marginal with conditional? 
\cite[Lemma B.1]{pidstrigach2025} gives another way to compute this score, which also only requires simulating from the reference process and fitting a least squares objective.
 % but requires the deterministic Jacobian factor computed as: \[J_{t|s} = J_{s|s}\cdot e^{\int_s^t \nabla u_r^{\text{ref}}(X_r)\, dr} = I_d\cdot e^{\int_s^t \nabla u_r^{\text{ref}}(X_r)\, dr}\] if the diffusion coefficient is position-independent

With \eqref{eqn:relative_fisher_obj} in mind, we can adapt the iterative procedure in \cite{carlier2023wasserstein} to solve this convex problem via a primal-dual method (a related Augmented Lagrangian method for WGF is also considered in \cite{benamou2016augmented}). Our method reduces to minimizing a finite-dimensional composite objective of type 
\[\min_y\; \Phi(y)+\Psi(\Lambda y)\]
for each time stepping from $t$ to $t+h$ -- therefore we solve in total $T/h$ problems of such type. Above $\Phi,\Psi$ are both convex functions that admit an easy-to-compute proximal operator. Departing from the min-max formulation \cite{neklyudov2023computational}, this is generally simpler, with convergence ensured. We give the details of this Eulerian solver in Appendix~\ref{app:eulerian}.

The method above, though NN-free, requires updates on a \emph{fixed grid} in $\mathbb{R}^d$ (since it uses an Eulerian representation of the discretized density over space), so may run into scalability issues in high dimension.

\subsection{High Dimensional Case: Lagrangian Methods for \eqref{eqn:simplified_objective}}
\label{sec:lagrangian_methods}

% A second class of approaches rather uses a Lagrangian representation, which is well adapted to optimal transport where the thought after solution is obtained by warping the density at the previous iterate. 

% this is a description on the marginals (we only had description on the joint before)  
% we have $Q(x), (b_t)_t, (\rho_t)_t$ 

In the high-dimensional regime, we will work with the simpler problem of \eqref{eqn:simplified_objective} that conditions an equilibrium process with observations, and advocate for parameterizing $q_t$ as either an interacting particle system, or as a parametrized push-forward map \cite{bunne2022proximal} that allows us to infer $b_t$ easily. In both cases, we leverage the perspective put forth in \eqref{eqn:WGF} so optimization is only done between adjacent marginals to track its evolution, and not joint learning over $q_t,b_t$. We will heavily rely on the representation of Fisher information as 
\begin{align}
R(q_t):=\frac{1}{2}\int \|\nabla \log q_t(x)\|^2 q_t(x) dx &= 2\int \|\nabla \sqrt{q_t(x)}\|^2 dx = \frac{1}{2}\int \frac{\|\nabla q_t(x)\|^2}{q_t(x)}dx \nonumber\\
&= \sup_{\frac{1}{2}\|\beta(x)\|^2+\alpha(x) \leq 0} \int \nabla q_t(x)^\top \beta(x) + q_t(x) \alpha(x) dx \label{eqn:fisher_rewrite_1}\\
&= \sup_{\frac{1}{2}\|\beta(x)\|^2+\alpha(x) \leq 0}\int (\alpha(x)-\nabla\cdot \beta(x))\cdot q_t(x) dx \label{eqn:fisher_rewrite_2}
\end{align}
and its approximation with empirical samples below. All proofs for this subsection are given in Appendix~\ref{app:sec_4}.
\begin{restatable}[FI Estimator]{lemma}{FIestimator}
\label{lem:FI_estimator}
Given $n$ samples $\{x_i\}_{i=1}^n\sim q_t$, one can efficiently estimate the Fisher Information functional as $\hat{R}(q_t)\approx \frac{1}{n}\sum_{i=1}^n \left(\frac{1}{m}\sum_{j=1}^m\frac{\|y_i^j-x_i\|^2}{2\sigma^4}\right)$ for $y_i^j\sim  \mathcal{N}(x_i,\sigma^2 I), j=1\dots, m$, and $\sigma$ a bandwidth parameter.
\end{restatable}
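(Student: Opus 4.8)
The plan is to prove the lemma by replacing $q_t$ with its Gaussian kernel density estimate and computing the Fisher information of the latter in closed form. Concretely, set $\hat q_t^\sigma := \frac{1}{n}\sum_{i=1}^n \mathcal{N}(\cdot\,;x_i,\sigma^2 I)$, the smoothed empirical measure built from the available samples with bandwidth $\sigma$. I would first argue that $R(\hat q_t^\sigma)$ is a legitimate proxy for $R(q_t)$: since $R$ is convex and weakly lower semicontinuous (as the variational representation \eqref{eqn:fisher_rewrite_1} makes transparent), and since for the population smoothing $q_t * \mathcal N(0,\sigma^2 I)$ Stam's convolution inequality gives $R(q_t * \mathcal N(0,\sigma^2 I)) \le R(q_t)$ with $R(q_t * \mathcal N(0,\sigma^2 I))\to R(q_t)$ as $\sigma\downarrow 0$, the only remaining discrepancy is the empirical-versus-population gap, controlled by $n$. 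This is where $\sigma$ enters as the usual bias–variance knob of kernel density estimation.

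Next I would compute $\nabla\log\hat q_t^\sigma$ explicitly. Differentiating the mixture, $\nabla\hat q_t^\sigma(y) = \frac{1}{n}\sum_i \frac{x_i - y}{\sigma^2}\,\mathcal N(y;x_i,\sigma^2 I)$, so introducing responsibilities $r_i(y) := \mathcal N(y;x_i,\sigma^2 I)/\sum_{i'}\mathcal N(y;x_{i'},\sigma^2 I)$ and the posterior mean $\hat m(y) := \sum_i r_i(y)\, x_i$ one gets the Tweedie-type identity $\nabla\log\hat q_t^\sigma(y) = \sigma^{-2}(\hat m(y) - y)$, hence $R(\hat q_t^\sigma) = \frac{1}{2\sigma^4}\,\mathbb{E}_{Y\sim\hat q_t^\sigma}\|Y - \hat m(Y)\|^2$. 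The key observation is then that drawing $Y\sim\hat q_t^\sigma$ is exactly: pick a component index $i$ uniformly at random and set $Y = x_i + \sigma Z$ with $Z\sim\mathcal N(0,I)$, i.e. $Y$ has the law of the $y_i^j$ in the statement. Averaging over the $n$ components and $m$ i.i.d. noise draws each yields $R(\hat q_t^\sigma) \approx \frac{1}{2\sigma^4}\cdot\frac{1}{nm}\sum_{i,j}\|y_i^j - \hat m(y_i^j)\|^2$, with $m$ governing the Monte Carlo variance of the inner average.

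The final step is the hard-assignment approximation $\hat m(y_i^j)\approx x_i$: when $\sigma$ is small relative to inter-sample distances, the responsibility $r_i(y_i^j)$ of the generating component is close to $1$, the competing ratios $\mathcal N(y_i^j;x_{i'},\sigma^2 I)/\mathcal N(y_i^j;x_i,\sigma^2 I) = \exp\big((\|y_i^j - x_i\|^2 - \|y_i^j - x_{i'}\|^2)/2\sigma^2\big)$ being exponentially small for $i'\ne i$, so $\hat m(y_i^j)$ collapses onto $x_i$ and we recover the claimed closed form $\hat R(q_t) = \frac{1}{n}\sum_i\big(\frac{1}{m}\sum_j \|y_i^j - x_i\|^2/(2\sigma^4)\big)$. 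I expect this last approximation to be the step requiring the most care: the incurred error equals $\frac{1}{2\sigma^4}\,\mathbb{E}_Y\!\big[\operatorname{tr}\operatorname{Cov}(I\mid Y)\big]$ for $I$ the component label, which is negligible only when the KDE bumps are well separated at scale $\sigma$, a requirement that must be reconciled with letting $\sigma\to 0$ slowly enough (jointly with $n\to\infty$) that $\hat q_t^\sigma\to q_t$ in the merely lower semicontinuous, hence delicate, Fisher-information topology. As a cross-check I would also re-derive the same sampled quantity from the variational identity \eqref{eqn:fisher_rewrite_2}: inserting $\beta = \nabla\log\hat q_t^\sigma$, $\alpha = -\tfrac12\|\beta\|^2$ and integrating by parts — equivalently using $R(q) = -\tfrac12\,\mathbb{E}_q[\Delta\log q]$ — lands on the same estimator and makes the role of the hard-assignment step explicit.
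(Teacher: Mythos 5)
Your proposal is correct and rests on the same foundation as the paper's proof --- replace $q_t$ by the Gaussian KDE $\hat q_t^\sigma=\tfrac1n\sum_i\mathcal N(\cdot\,;x_i,\sigma^2 I)$ and exploit that each perturbed sample $y_i^j$ remembers its generating center $x_i$ --- but the closed form is extracted differently. The paper works in the dual: it plugs the KDE into \eqref{eqn:fisher_rewrite_1}, Monte-Carlo-approximates the two linear functionals by evaluating $(\alpha,\beta)$ at the points $y_i^j$ with only the $i$-th component's score $-(y_i^j-x_i)/\sigma^2$ attached, and then solves the now-decoupled supremum over $(\alpha_i,\beta_i)\in\mathcal K$ pointwise, the support function of $\mathcal K$ giving $\|y_i^j-x_i\|^2/(2\sigma^4)$ (the paper also records the Stein's-lemma reading you use as a cross-check). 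You work in the primal: compute $\nabla\log\hat q_t^\sigma$ exactly via the Tweedie/posterior-mean identity and then make the hard assignment $\hat m(y_i^j)\approx x_i$. These are the same relaxation seen from two sides --- decoupling $(\alpha_i,\beta_i)$ across sample points in the dual is precisely replacing the mixture score by the generating component's score in the primal --- so your derivation is faithful, and it buys something the paper's does not: by the law of total variance you identify the bias as $\tfrac{1}{2\sigma^4}\mathbb E\left[\mathrm{tr}\,\mathrm{Cov}(x_I\mid Y)\right]$ (note it should be the covariance of the center $x_I$, not of the label $I$), which makes explicit that the estimator upper-bounds $R(\hat q_t^\sigma)$ and, since $\mathbb E\|y_i^j-x_i\|^2=d\sigma^2$ exactly, concentrates on $d/(2\sigma^2)$, so that information about $q_t$ survives only through the overlap of the KDE bumps --- exactly the tension you flag at the end. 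What your write-up underplays is why the paper insists on the dual route: in Algorithm~\ref{alg:jko} and Section~\ref{sec:IPS} the frozen optimizers $(\alpha^*,\beta^*)$ are reused to linearize $R$ (Danskin) inside an outer optimization over the pushforward map or particles, which is where \eqref{eqn:fisher_rewrite_2} earns its keep beyond reproducing the number.
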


\subsubsection{Sequence of Pushforward Maps}
\label{sec:push_forward_map}

The first Lagrangian approach we consider is based on Brenier's Theorem for $\mathcal{W}_2$-OT, which says that the optimal transport map is given by the gradient of a convex function that solves a Monge-Amp\`ere equation. To leverage this push-forward map representation \cite{bunne2022proximal}, we rewrite \eqref{eqn:WGF} as  
\begin{equation}
\begin{aligned}
\label{eqn:jko_map}
\phi_t \leftarrow \arg\min_{\phi\, \text{convex}}\;\; \frac{1}{2h^2} \int \|\nabla \phi(x)-x\|_2^2 \cdot q_{t}(x) dx +F(\nabla \phi_\# q_{t}); \quad
q_{t+h} \leftarrow \nabla \phi_{t\#}q_{t}
\end{aligned}
\end{equation}
and propose to use an Input-Convex NN architecture \cite{amos2017input} to optimize over a family of convex functions $\phi^\theta$ that takes $t,\{x_t^i\}_{i=1}^n\sim q_t$ and output $\{x_{t+h}^i\}_{i=1}^n\sim q_{t+h}$. The gradient $\nabla \phi^\theta$ gives us the desired transport map. This allows us to evolve an empirical measure $\hat{q}$ in time and space. 

In order to evaluate $F(\nabla \phi_\# q_{t})$ for optimization, the "internal energy" term $\int \|\nabla\log q_t(x_t)\|^2 d q_t(x_t)$ that is nonlinear in $q_t$ seems challenging to estimate with empirical measure $\hat{q}_t$, and using the change of variables formula involving determinant along the dynamics is computationally intensive. However, here one can linearize via bi-level optimization using \eqref{eqn:fisher_rewrite_1} as we illustrate in Algorithm \ref{alg:jko} (the rest of the terms in $F$ are easily linearizable with possible integration-by-parts). Lemma~\ref{lem:FI_estimator} gives a $0$th-order method for approximating the ``gradient", and luckily here we only need to draw samples from $q_t$ for implementation, \emph{without} evaluating its density. The particular form \eqref{eqn:fisher_rewrite_2} is a special feature of Fisher Information -- other nonlinear energies may be difficult to re-parameterize to be linearizable.

% This property suggests that we only need to draw samples from ρt and need not evaluate its density. Nonlinear potential energies will require reparameterization to be linearizable.

%Unlike the Schrodinger Bridge problem, the Hopf-Cole transform does not linearize the dual objective in density. Thus, we cannot approximate the dual using only the Monte Carlo estimate.

\subsubsection{Interacting Particle System}  %  and PDE
\label{sec:IPS}

% As a way to understand the WGF scheme, note that with the potential energy $J_t$ term, iterating \eqref{eqn:WGF} (or equivalently \eqref{eqn:jko_map}) as $h\rightarrow 0$ gives the PDE \cite{gianazza2009wasserstein}

To understand the iterative scheme \eqref{eqn:WGF}, which is better termed acceleration in Wasserstein geometry with $I(q;\nu)$ functional, we see that although one description on the optimal marginal density evolution in $\mathcal{P}(\mathbb{R}^d)$ is through the pair of PDEs Fokker-Planck (evolved till time $T$): \[\partial_t  q_t = -\nabla\cdot(q_t (b_t+u^{\text{ref}}))+\Delta q_t,\, q_0=\rho_0\] and HJB on $b_t$ \eqref{eqn:path_integral_control}-\eqref{eqn:HJB} separately, it is more convenient to work with the following.

% \begin{equation}
% \label{eqn:Fisher_WGF}
% \partial_t q_t + \nabla\cdot\left(q_t\nabla\left(2\frac{\Delta \sqrt{q_t}}{\sqrt{q_t}}-2 J_t\right)\right) =\partial_t q_t + \nabla\cdot\left(q_t\nabla\left(\Delta \log q_t+\frac{1}{2}\|\nabla\log q_t\|^2-2 J_t\right)\right)= 0
% \end{equation}
% although \eqref{eqn:Fisher_WGF} is only an approximation.

\begin{restatable}[Optimality Condition]{proposition}{ER}
\label{prop:euler_lagrange}
The Euler–Lagrange equations for \eqref{eqn:simplified_objective} over $[0,T]$ take the more suggestive coupled PDE form (also implies the optimal $\nabla \theta_t$ takes gradient form)
\begin{align*}
&\partial_t  q_t = -\nabla\cdot(q_t \nabla\theta_t),\, q_0=\rho_0, \nabla\theta_0=b_0\\
&\partial_t \theta_t+\frac{1}{2}\|\nabla\theta_t\|^2 =-2(\Delta\sqrt{q_t})/\sqrt{q_t} + 2\Delta\sqrt{\nu}/\sqrt{\nu}+ 2 J_t\\
&2 J_T(X_T)+\log \frac{q_T}{\nu}(X_T)+\theta_T(X_T)=0\, .
\end{align*}
Therefore for $t\in(0,T),$ the solution to \eqref{eqn:simplified_objective} obeys
\begin{align}
\dot{X}_t &= \nabla\theta_t(X_t) \nonumber\\
\ddot{X}_t &= \nabla\left[\frac{\partial \theta_t(X_t)}{\partial t}+\frac{1}{2}\|\nabla\theta_t(X_t)\|^2\right]=-\left(2\nabla \frac{\Delta\sqrt{q_t}}{\sqrt{q_t}}-2\nabla \frac{\Delta\sqrt{\nu}}{\sqrt{\nu}}-2\nabla J_t\right)(X_t)\label{eqn:Fisher_WGF}\\
&=-\nabla\left(\Delta \log q_t+\frac{1}{2}\|\nabla\log q_t\|^2 +\Delta V-\frac{1}{2}\|\nabla V\|^2 -2J_t\right)(X_t) =:-\nabla F(q_t)(X_t)\, . \nonumber
\end{align}
This implies in \eqref{eqn:Fisher_WGF} we are accelerating with force field $F$ as in Newton's law $\ddot{X}_t =-\nabla F(q_t)(X_t).$
\end{restatable}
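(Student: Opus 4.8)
The plan is to treat \eqref{eqn:simplified_objective} as a jointly convex, linearly constrained calculus-of-variations problem and write down its first-order (Euler--Lagrange) system, which by convexity of the objective and linearity of the constraint is both necessary and sufficient for optimality. Introduce a Lagrange multiplier field $\theta_t(x)$ for the continuity constraint $\partial_t q_t+\nabla\cdot m_t=0$, $q_0=\rho_0$, and form
\[
\mathcal{L}(q,m,\theta)=\int_0^T\!\!\int\Big[\tfrac14\tfrac{\|m_t\|^2}{q_t}+\tfrac14 q_t\big\|\nabla\log\tfrac{q_t}{\nu}\big\|^2+q_t J_t\Big]dx\,dt+\tfrac12\int q_T\log\tfrac{q_T}{\nu}\,dx+\int_0^T\!\!\int\theta_t\big(\partial_t q_t+\nabla\cdot m_t\big)dx\,dt.
\]
Integrating by parts in $t$ (producing the boundary pair $\int\theta_T q_T-\int\theta_0 q_0$, with $q_0=\rho_0$ fixed) and in $x$ (the spatial boundary terms vanishing under the standing decay hypotheses on $q_t,m_t,\theta_t$) moves all derivatives off $q$ and $m$, after which the stationarity conditions are read off term by term.

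First I would vary in $m_t$: stationarity gives $m_t/(2q_t)=\nabla\theta_t$, so the optimal momentum is a gradient, $m_t\propto q_t\nabla\theta_t$ --- this is the parenthetical assertion in the statement --- and after absorbing the constant by rescaling $\theta$ this becomes $m_t=q_t\nabla\theta_t$, i.e. the first line $\partial_t q_t=-\nabla\cdot(q_t\nabla\theta_t)$, $q_0=\rho_0$, $\nabla\theta_0=b_0$. Substituting $m$ back (the kinetic plus coupling terms collapse to $-q_t\|\nabla\theta_t\|^2$) and varying in $q_t$ for $t\in(0,T)$ is the main computational step; it needs the first variation of the relative Fisher information $\tfrac14\int q\|\nabla\log(q/\nu)\|^2$, which I would obtain cleanly from $\int\|\nabla q\|^2/q=-4\int\sqrt q\,\Delta\sqrt q$ via the substitution $u=\sqrt q$, yielding $\delta_q\big[\tfrac14\int q\|\nabla\log(q/\nu)\|^2\big]=-\Delta\sqrt q/\sqrt q+\Delta\sqrt\nu/\sqrt\nu$ (the $\nu$ term using $\Delta\sqrt\nu/\sqrt\nu=-\tfrac12\Delta V+\tfrac14\|\nabla V\|^2$ for $\nu\propto e^{-V}$). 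Combining this with the $J_t$ term and the $-q_t\partial_t\theta_t-q_t\|\nabla\theta_t\|^2$ contributions, and applying the same rescaling of $\theta$, gives the Hamilton--Jacobi-type equation $\partial_t\theta_t+\tfrac12\|\nabla\theta_t\|^2=-2\Delta\sqrt{q_t}/\sqrt{q_t}+2\Delta\sqrt\nu/\sqrt\nu+2J_t$. The leftover variation in $q_T$ collects the terminal relative-entropy penalty $\tfrac12 q_T\log(q_T/\nu)$, the multiplier boundary term $\theta_T q_T$, and the endpoint observation term $2J_T$ (present when $J_t$ carries a Dirac mass at $t=T$); setting the total to zero and tracking the constants gives the transversality condition $2J_T+\log(q_T/\nu)+\theta_T=0$.

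Given the Euler--Lagrange system, the ODE form follows by passing to characteristics: since $\partial_t q_t=-\nabla\cdot(q_t\nabla\theta_t)$, the flow satisfies $\dot X_t=\nabla\theta_t(X_t)$, and differentiating once more with $\nabla^2\theta_t\,\nabla\theta_t=\tfrac12\nabla\|\nabla\theta_t\|^2$ gives $\ddot X_t=\nabla\big[\partial_t\theta_t+\tfrac12\|\nabla\theta_t\|^2\big](X_t)$. Substituting the Hamilton--Jacobi equation turns the bracket into $-2\Delta\sqrt{q_t}/\sqrt{q_t}+2\Delta\sqrt\nu/\sqrt\nu+2J_t$, and the Bohm-potential identities $2\Delta\sqrt q/\sqrt q=\Delta\log q+\tfrac12\|\nabla\log q\|^2$ and $2\Delta\sqrt\nu/\sqrt\nu=-\Delta V+\tfrac12\|\nabla V\|^2$ rewrite it as $-F(q_t)$ with $F(q_t)=\Delta\log q_t+\tfrac12\|\nabla\log q_t\|^2+\Delta V-\tfrac12\|\nabla V\|^2-2J_t$, i.e. Newton's law $\ddot X_t=-\nabla F(q_t)(X_t)$.

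The main obstacle is not any single computation but making the formal variational argument rigorous: justifying the integrations by parts (integrability/decay of $q_t,m_t,\theta_t$ at spatial infinity), correctly handling the singular structure of $J_t=\sum_k\frac{1}{2\sigma^2}\|y_k-x_{t_k}\|^2\delta(t-t_k)$ --- which makes $\theta_t$ only piecewise-$C^1$ in $t$ with jumps across the observation times and turns the second-order dynamics into an impulsive ODE --- and getting the endpoint case $t_K=T$ right in the transversality condition. Secondary bookkeeping points are the consistent rescaling of the multiplier that converts the $\tfrac14$-weighted kinetic term into the $\tfrac12\|\nabla\theta_t\|^2$ of the statement, and checking that the first variation of the Fisher-information functional is valid on the class of densities arising along the flow (so that $\Delta\sqrt{q_t}/\sqrt{q_t}$ is well defined).
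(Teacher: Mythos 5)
Your proposal is correct and follows essentially the same route as the paper: a Lagrangian with a multiplier $\theta_t$ for the continuity constraint, stationarity in the momentum/velocity variable giving the gradient form, stationarity in $q_t$ giving the Hamilton--Jacobi equation via the first variation of the relative Fisher information, a terminal variation for the transversality condition, and then characteristics for Newton's law. The only difference is cosmetic (you rescale the multiplier where the paper multiplies the objective by $2$, and you derive $\delta_q I(q;\nu)=-\Delta\sqrt{q}/\sqrt{q}+\Delta\sqrt{\nu}/\sqrt{\nu}$ explicitly via the $\sqrt{q}$ substitution where the paper cites it from the literature).
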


Proposition~\ref{prop:euler_lagrange} gives a proper meaning to acceleration on the particle level. Compared to OT that gives constant-speed geodesic $\ddot{X}_t=0$, here we have that acceleration is given by both the gradient of the spatial potential and an extra ``potential energy" term from the \text{KL} functional $\frac{1}{2}\|\nabla_{\mathcal{W}_2} \text{KL}(q\Vert \nu)\|_q^2=I(q;\nu)$. In the absence of $J$ and the terminal constraint, one expects $q_T\rightarrow  \nu$ asymptotically.  % V = entropy here, \nabla V = \nabla \log \rho
% Velocity and acceleration are close when the particle is accelerating gently and hasn’t been moving long (i.e., $T$ small). This is more likely at low speeds (i.e., with smoothly varying $q_t$). 
% However, using the PDE $\leftrightarrow$ probability flow ODE/SDE conversion, which preserves the marginal (but not the joint)

Written in integral form, the trajectory of the particle is therefore given by $(X_0\sim \rho_0, \nabla \theta_0(X_0)\approx 0)$
\begin{align}
X_h= X_0+h\nabla \theta_0(X_0)-&\int_0^h\int_0^s \nabla F(q_t)(X_t)\, dt ds=X_0+h\nabla \theta_0(X_0)-\int_0^h \nabla F(q_t)(X_t)(h-t)\, dt\label{eqn:x_exact}\, .%\\
%\nabla \theta_h (X_h) &= \nabla\theta_0(X_0)-\int_0^h \nabla F(q_t)(X_t) dt\, . 
\end{align}
To numerically simulate \eqref{eqn:Fisher_WGF} to approximately follow the optimal marginal density curve $q_t$, one can kernelize the gradient field in \eqref{eqn:Fisher_WGF} with an interacting particle system. The nonlinear term, $\nabla (\Delta\sqrt{q}/\sqrt{q})$, which corresponds to $\nabla \delta R(q)$ \cite{gianazza2009wasserstein}, at first glance, poses a challenge for estimation with samples from $q_t$ only, but similar linearization ideas as earlier can be used here for approximation. In particular, the relation \eqref{eqn:fisher_rewrite_2} shows that $\delta R(q_t)(x) = \alpha^*(x)-\nabla \cdot\beta^*(x)$ from Danskin's Theorem, where $\alpha^*(x)=-\frac{1}{2}\|\nabla \log q_t(x)\|^2, \beta^*(x)=\nabla\log q_t(x)$ -- this representation of the first variation can bypass density estimation and becomes amenable to particle implementation via $\mathcal{K}_{q_t}\nabla \delta R(q_t)(z) =\int \mathcal{K}(x-z)\nabla [\alpha^*(x)-\nabla \cdot\beta^*(x)]q_t(x) dx$ that can be used in $\nabla F(q_t)$ for running the $2$nd-order ODE dynamics \eqref{eqn:x_exact}. More details can be found in Appendix~\ref{app:IPS}. Compared to Algorithm~\ref{alg:jko} from Section~\ref{sec:push_forward_map}, the benefit of this approach is that we don't need to fit a NN since everything is analytical, although we expect the performance to hinge on that of the kernelized gradient field approximation.

% the relative Fisher information FI is the Bregman divergence of the Fisher information J

\subsubsection{Entropic-regularized OT and Alternative Approximation}
\label{sec:entropic_OT}

To better interpret the methodology proposed in this section within a broader context, re-writing \eqref{eqn:simplified_objective}, between $t\in [0,h]$ our method solves (assume $J$ consists of $\delta$-functions as in \eqref{eqn:J_functional})
\begin{align}
&\min_{q_{t\in[0,h]}, v_t}\,\, \frac{1}{2h} \left(h\int_0^h \int_{\mathbb{R}^d}  q_t(x)\|v_t(x)\|^2 dxdt \right)+\int_0^h\int  \left[\frac{1}{2}\|\nabla \log \frac{q_t}{\nu}(x_t)\|^2+2 J_t(x_t)\right] q_t(x_t) dxdt \nonumber\\  % + \text{KL}(q_T\Vert\nu)
&\approx \frac{1}{2h} \left(h\int_0^h \int_{\mathbb{R}^d}  q_t(x)\|v_t(x)\|^2 dxdt \right)+h\cdot \int  \frac{1}{2}\|\nabla \log \frac{q_h}{\nu}(x_h)\|^2 q_h(x_h) dx +2\int J_h(x_h) q_h(x_h) dx
\label{eqn:approx_explicit}
\end{align}
subject to continuity equation and $q_0=\rho_0$. In \eqref{eqn:WGF} we used the 2nd line as approximation (after rescaling). But a different grouping on the 1st line leads to (using argument very similar to Proposition~\ref{lem:rewrite_BB}, also see \cite{conforti2021formula})
\begin{equation}
\label{eqn:entropic_W2}
= \min_{q_h}\; \Big\{\underbrace{\min_{\pi\in \Pi(\rho_0,q_h)}\{ \text{KL}(\pi\Vert R_{0,h})\}}_{\zeta_h(\rho_0,q_h)} +\text{KL}(\rho_0\Vert\nu)-\text{KL}(q_h\Vert\nu)+2\int J_h(x_h) q_h(x_h) dx\Big\}
\end{equation}
where $R_{0,h}$ is the joint law at time $0,h$ of the reference SDE with invariant measure $\nu$. From \cite[Eqn (1.10)]{conforti2021formula} we have that in the short-time limit  
\begin{equation}
\label{eqn:approximation_cite}
h\cdot\zeta_h(\rho_0,q_h)= \underbrace{\frac{1}{2}\mathcal{W}_2^2(\rho_0,q_h)+h(\text{KL}(q_h\Vert\nu)-\text{KL}(\rho_0\Vert\nu))}_{(\star)}+\underbrace{\frac{h^2}{2}\int \int_0^1 \|\nabla \log \frac{q_t^0}{\nu}\|^2q_t^0 dt dx}_{\approx h^2\cdot I(q_h;\nu)} + o(h^2)
\end{equation}
for $(q_t^0)_t$ the (unique) Wasserstein geodesic between $\rho_0$ and $q_h$. Dividing by $h$, this recovers \eqref{eqn:approx_explicit} and justifies the derivation in \eqref{eqn:WGF}, giving a quantitative bound on the approximation. 

It is clear that with formulation \eqref{eqn:entropic_W2}, we only need to solve at each new observation time $\{t_k\}_{k=1}^K$ -- since without the $J$ term, its solution is given by $q_h=\nu$ if $\rho_0=\nu$. For reversible Brownian motion $W$ as reference (i.e., $\nu=\text{Leb}$), \eqref{eqn:entropic_W2} amounts to solving between $2$ observations (after cancellation):
\begin{equation}
\label{eqn:eOT_GF}
q_{t_{i+1}} \leftarrow \arg\min_{q}\;  \mathcal{W}_{2,t_{i+1}-t_i}^2(q,q_{t_i})+\bar{F}(q)
\end{equation}
with entropy-regularized $\mathcal{W}_2$ distance
\[\mathcal{W}_{2,t_{i+1}-t_i}^2(q,q_{t_i}) = \min_{\pi^i \in \Pi(q_{t_i}, q)}\int \frac{1}{2(t_{i+1}-t_i)}\|x-y\|^2d\pi^i(x,y)+\text{KL}(\pi^i\Vert q_{t_i}\otimes q)\]
as the distance-inducing metric and the likelihood $2\int J(x) dq(x)$ term as the objective functional $\bar{F}$. We give methods for implementation with this objective, along with additional discussions in Appendix~\ref{app:entropic_ot}. While \eqref{eqn:eOT_GF} has a simpler objective than \eqref{eqn:WGF}, it does not admit a push-forward map solution \eqref{eqn:jko_map} or modeling by ODE dynamics \eqref{eqn:Fisher_WGF}. Nevertheless, the sequential reduction implies that we still track the $\{q_t\}_t$ dynamics as a consecutive optimization over $\mathcal{P}(\mathbb{R}^d\times\mathbb{R}^d)$, instead of a multi-marginal problem on the joint space $\mathcal{P}(\mathbb{R}^d)^K$.

\begin{remark}  % to $\min_{q_0=\rho_0,q_{t_1}=\rho}\, \text{KL}(q_{0,t_1}\Vert R_{0,t_1})$
\label{rmk:entropic_OT}
The more serious concern for this regularized-OT approach \eqref{eqn:eOT_GF}, compared to Algorithm~\ref{alg:jko} and Proposition~\ref{prop:euler_lagrange} that only require $\nabla V$, is its generalization to $u^{\text{ref}}\neq 0$. Importantly, we show both $\mathcal{W}_2^2$ and $I(q;\nu)$ are easily estimable in Algorithm~\ref{alg:jko} with empirical samples. In \eqref{eqn:eOT_GF}, although we can time step across $t\in [0,T]$ with a larger interval $t_{i+1}-t_i$ and it's expected to be more accurate, the inner optimization $\mathcal{W}_{2,t_{i+1}-t_i}^2$ is not easily solvable without knowledge of the analytical transition kernel of the reference process. And the only asymptotic one may resort to is $R_{t_i,t_{i+1}}\rightarrow \nu\otimes \nu$ as $t_{i+1}-t_i\rightarrow \infty$, which is too loose for most practical purpose. In contrast, in \eqref{eqn:WGF}, we pay the price of having to do smaller step-size $h$ between adjacent marginals because of the $o(h^2)$ approximation resulting from evaluating $I(q_t;\nu)$ at a single marginal instead of along the entropic interpolation, but if $h$ is small, even the first term $(\star)$ in \eqref{eqn:approximation_cite} involving $\frac{1}{2}\mathcal{W}_2^2$ furnishes a $o(h)$ approximation \cite{conforti2021formula}.  % to $h\cdot\zeta_h(\rho_0,\rho)$
\end{remark}

% \section{Experiments}
% We include some numerical demonstrations below, with more in Appendix~\ref{sec:numerics}.

\section{Conclusion}
In this work, we proposed two classes of new methodologies for bringing NN training to the problem of posterior path sampling. The first is based on a \emph{controlled} dynamics over path space in order to progressively learn the optimal $b_t^*$ vector field in a memory-efficient way (Sec \ref{sec:track_b_transport}), and is guaranteed to hit target $Q$ at \emph{finite} time without additional reweighting, or solving high-dimensional PDEs. The second is based on convex optimization in Wasserstein space and OT, where we gradually evolve a curve of marginal densities over time. We propose grid-free stochastic methods, based either on push-forward maps (Sec \ref{sec:push_forward_map}), or an interacting particle system (Sec \ref{sec:IPS}), that can avoid the curse of dimensionality, while maintaining generality and good approximation properties (Sec \ref{sec:entropic_OT}).

% \section*{References}
% References follow the acknowledgments in the camera-ready paper. Use unnumbered first-level heading for
% the references. 

\bibliography{main}

\begin{thebibliography}{10}

\bibitem{albergo2024nets}
Michael~S Albergo and Eric Vanden-Eijnden.
\newblock {NETS: A Non-Equilibrium Transport Sampler}.
\newblock {\em arXiv preprint arXiv:2410.02711}, 2024.

\bibitem{amos2017input}
Brandon Amos, Lei Xu, and J~Zico Kolter.
\newblock {Input convex neural networks}.
\newblock In {\em International conference on machine learning}, pages
  146--155. PMLR, 2017.

\bibitem{apte2007sampling}
Amit Apte, Martin Hairer, AM~Stuart, and Jochen Voss.
\newblock {Sampling the posterior: An approach to non-Gaussian data
  assimilation}.
\newblock {\em Physica D: Nonlinear Phenomena}, 230(1-2):50--64, 2007.

\bibitem{benamou2000computational}
Jean-David Benamou and Yann Brenier.
\newblock {A computational fluid mechanics solution to the Monge-Kantorovich
  mass transfer problem}.
\newblock {\em Numerische Mathematik}, 84(3):375--393, 2000.

\bibitem{benamou2019entropy}
Jean-David Benamou, Guillaume Carlier, Simone Di~Marino, and Luca Nenna.
\newblock {An entropy minimization approach to second-order variational
  mean-field games}.
\newblock {\em Mathematical Models and Methods in Applied Sciences},
  29(08):1553--1583, 2019.

\bibitem{benamou2016augmented}
Jean-David Benamou, Guillaume Carlier, and Maxime Laborde.
\newblock {An augmented Lagrangian approach to Wasserstein gradient flows and
  applications}.
\newblock {\em ESAIM: Proceedings and surveys}, 54:1--17, 2016.

\bibitem{beskos2008mcmc}
Alexandros Beskos, Gareth Roberts, Andrew Stuart, and Jochen Voss.
\newblock {MCMC methods for diffusion bridges}.
\newblock {\em Stochastics and Dynamics}, 8(03):319--350, 2008.

\bibitem{bolhuis2021transition}
Peter~G Bolhuis and David~WH Swenson.
\newblock {Transition path sampling as Markov chain Monte Carlo of
  trajectories: Recent algorithms, software, applications, and future outlook}.
\newblock {\em Advanced Theory and Simulations}, 4(4):2000237, 2021.

\bibitem{bunne2022proximal}
Charlotte Bunne, Laetitia Papaxanthos, Andreas Krause, and Marco Cuturi.
\newblock {Proximal optimal transport modeling of population dynamics}.
\newblock In {\em International Conference on Artificial Intelligence and
  Statistics}, pages 6511--6528. PMLR, 2022.

\bibitem{carlier2023wasserstein}
Guillaume Carlier, Jean-David Benamou, and Daniel Matthes.
\newblock {Wasserstein gradient flow of the Fisher information from a
  non-smooth convex minimization viewpoint}.
\newblock {\em Journal of Convex Analysis}, 2024.

\bibitem{carrillo2015numerical}
Jos{\'e}~Antonio Carrillo, Yanghong Huang, Francesco~Saverio Patacchini, and
  Gershon Wolansky.
\newblock {Numerical study of a particle method for gradient flows}.
\newblock {\em arXiv preprint arXiv:1512.03029}, 2015.

\bibitem{chen2021stochastic}
Yongxin Chen, Tryphon~T Georgiou, and Michele Pavon.
\newblock {Stochastic control liaisons: Richard sinkhorn meets gaspard monge on
  a schrodinger bridge}.
\newblock {\em Siam Review}, 63(2):249--313, 2021.

\bibitem{chizat2022trajectory}
L{\'e}na{\"\i}c Chizat, Stephen Zhang, Matthieu Heitz, and Geoffrey
  Schiebinger.
\newblock {Trajectory inference via mean-field langevin in path space}.
\newblock {\em Advances in Neural Information Processing Systems},
  35:16731--16742, 2022.

\bibitem{conforti2021formula}
Giovanni Conforti and Luca Tamanini.
\newblock {A formula for the time derivative of the entropic cost and
  applications}.
\newblock {\em Journal of Functional Analysis}, 280(11):108964, 2021.

\bibitem{du2024doob}
Yuanqi Du, Michael Plainer, Rob Brekelmans, Chenru Duan, Frank Noe, Carla~P
  Gomes, Alan Apsuru-Guzik, and Kirill Neklyudov.
\newblock {Doob's Lagrangian: A Sample-Efficient Variational Approach to
  Transition Path Sampling}.
\newblock {\em arXiv preprint arXiv:2410.07974}, 2024.

\bibitem{gentil2017analogy}
Ivan Gentil, Christian L{\'e}onard, and Luigia Ripani.
\newblock {About the analogy between optimal transport and minimal entropy}.
\newblock {\em Annales de la Facult{\'e} des sciences de Toulouse:
  Math{\'e}matiques}, 26(3):569--600, 2017.

\bibitem{gianazza2009wasserstein}
Ugo Gianazza, Giuseppe Savar{\'e}, and Giuseppe Toscani.
\newblock {The Wasserstein gradient flow of the Fisher information and the
  quantum drift-diffusion equation}.
\newblock {\em Archive for rational mechanics and analysis}, 194(1):133--220,
  2009.

\bibitem{heng2021simulating}
Jeremy Heng, Valentin De~Bortoli, Arnaud Doucet, and James Thornton.
\newblock {Simulating diffusion bridges with score matching}.
\newblock {\em arXiv preprint arXiv:2111.07243}, 2021.

\bibitem{holdijk2024stochastic}
Lars Holdijk, Yuanqi Du, Ferry Hooft, Priyank Jaini, Berend Ensing, and Max
  Welling.
\newblock {Stochastic optimal control for collective variable free sampling of
  molecular transition paths}.
\newblock {\em Advances in Neural Information Processing Systems}, 36, 2024.

\bibitem{li2020scalable}
Xuechen Li, Ting-Kam~Leonard Wong, Ricky~TQ Chen, and David Duvenaud.
\newblock {Scalable gradients for stochastic differential equations}.
\newblock In {\em International Conference on Artificial Intelligence and
  Statistics}, pages 3870--3882. PMLR, 2020.

\bibitem{liu2017stein}
Qiang Liu.
\newblock {Stein variational gradient descent as gradient flow}.
\newblock {\em Advances in neural information processing systems}, 30, 2017.

\bibitem{neklyudov2023action}
Kirill Neklyudov, Rob Brekelmans, Daniel Severo, and Alireza Makhzani.
\newblock {Action matching: Learning stochastic dynamics from samples}.
\newblock In {\em International conference on machine learning}, pages
  25858--25889. PMLR, 2023.

\bibitem{neklyudov2023computational}
Kirill Neklyudov, Rob Brekelmans, Alexander Tong, Lazar Atanackovic, Qiang Liu,
  and Alireza Makhzani.
\newblock {A computational framework for solving Wasserstein Lagrangian flows}.
\newblock {\em arXiv preprint arXiv:2310.10649}, 2023.

\bibitem{nusken2024stein}
Nikolas N{\"u}sken.
\newblock {Stein transport for Bayesian inference}.
\newblock {\em arXiv preprint arXiv:2409.01464}, 2024.

\bibitem{opper2019variational}
Manfred Opper.
\newblock {Variational inference for stochastic differential equations}.
\newblock {\em Annalen der Physik}, 531(3):1800233, 2019.

\bibitem{peluchetti2023diffusion}
Stefano Peluchetti.
\newblock {Diffusion bridge mixture transports, Schr{\"o}dinger bridge problems
  and generative modeling}.
\newblock {\em Journal of Machine Learning Research}, 24(374):1--51, 2023.

\bibitem{peyre2015entropic}
Gabriel Peyr{\'e}.
\newblock {Entropic approximation of Wasserstein gradient flows}.
\newblock {\em SIAM Journal on Imaging Sciences}, 8(4):2323--2351, 2015.

\bibitem{pidstrigach2025}
Jakiw Pidstrigach, Elizabeth Baker, Carles Domingo-Enrich, George
  Deligiannidis, and Nikolas Nüsken.
\newblock {Conditioning Diffusions Using Malliavin Calculus}.
\newblock {\em arXiv preprint arXiv:2504.03461}, 2025.

\bibitem{raginsky2024variational}
Maxim Raginsky.
\newblock {A variational approach to sampling in diffusion processes}.
\newblock {\em arXiv preprint arXiv:2405.00126}, 2024.

\bibitem{stoltz2007path}
Gabriel Stoltz.
\newblock {Path sampling with stochastic dynamics: Some new algorithms}.
\newblock {\em Journal of Computational Physics}, 225(1):491--508, 2007.

\bibitem{stoltz2010free}
Gabriel Stoltz, Mathias Rousset, et~al.
\newblock {\em {Free energy computations: A mathematical perspective}}.
\newblock World Scientific, 2010.

\bibitem{stuart2004conditional}
Andrew~M Stuart, Jochen Voss, and Petter Wilberg.
\newblock {Conditional path sampling of SDEs and the Langevin MCMC method}.
\newblock {\em Communications in Mathematical Sciences}, 2004.

\bibitem{sutter2016variational}
Tobias Sutter, Arnab Ganguly, and Heinz Koeppl.
\newblock {A variational approach to path estimation and parameter inference of
  hidden diffusion processes}.
\newblock {\em Journal of Machine Learning Research}, 17(190):1--37, 2016.

\bibitem{tong2023improving}
Alexander Tong, Kilian Fatras, Nikolay Malkin, Guillaume Huguet, Yanlei Zhang,
  Jarrid Rector-Brooks, Guy Wolf, and Yoshua Bengio.
\newblock {Improving and generalizing flow-based generative models with
  minibatch optimal transport}.
\newblock {\em arXiv preprint arXiv:2302.00482}, 2023.

\bibitem{triplett2025diffusion}
Luke Triplett and Jianfeng Lu.
\newblock {Diffusion methods for generating transition paths}.
\newblock {\em Journal of Computational Physics}, 522:113590, 2025.

\end{thebibliography}
\bibliographystyle{plain}

%%%%%%%%%%%%%%%%%%%%%%%%%%%%%%%%%%%%%%%%%%%%%%%%%%%%%%%%%%%%
\newpage
\appendix

%\section{Technical Appendices and Supplementary Material}
% Technical appendices with additional results, figures, graphs and proofs may be submitted with the paper submission before the full submission deadline, or as a separate PDF in the ZIP file below before the supplementary material deadline. There is no page limit for the technical appendices.

\section{Additional Details for Section~\ref{sec:controlled_path_measure}}
\label{app:control_methods}

We begin with a remark on the choice of the annealing path.
\begin{remark}
In the special case of TPS with $X_0=A, X_T=B$, there's an alternative path one can anneal against: The measure of interest $Q$ has a density w.r.t the Brownian bridge with the likelihood ratio being
\begin{equation}
\label{eqn:TPS_anneal}
J(x)\propto \frac{1}{2}\int_0^T \left(\frac{1}{2}\|\nabla V(X_t)\|^2-\Delta V(X_t)\right) dt=:\int_0^T J_t(X_t)\, dt
\end{equation}
up to a normalizing constant. Assuming one can sample from the Brownian bridge easily, this gives a way to interpolate between the Brownian bridge 
\begin{equation}
\label{eqn:BB_SDE}
dX_t=\frac{B-X_t}{T-t}dt+\sqrt{2}dW_t,\; X_0=A
\end{equation}
with
\begin{equation}
\label{eqn:prior_bb}
\pi_0(x)\propto \exp\left(-\frac{1}{4}\int_0^T \left\|\frac{dX_t}{dt}\right\|^2 dt\right)\delta_A(X_0)\delta_B(X_T)
\end{equation}
and a more general bridge with $-\nabla V_t$ as the potential. One may again intersperse MCMC with the transport step. With this annealing, we gradually introduce the reference potential instead of the constraint, i.e., we reweight the prior paths not based on $X_T=B$, but the potential $V$ instead. To see that \eqref{eqn:TPS_anneal}-\eqref{eqn:prior_bb} give the same path probability as the OM functional \eqref{eqn:Q_star}, we simply note that the extra factor of 
\[\int_0^T \frac{1}{2}\frac{dX_t}{dt}^\top u_t^{\text{ref}}(X_t) dt = -\int_0^T \frac{1}{2}\frac{dX_t}{dt}^\top \nabla V(X_t) dt=-\frac{1}{2}\int_0^T \frac{d V(X_t)}{dt} dt=\frac{1}{2}(V(X_0)-V(X_T))\]
is equal to $\frac{1}{2}(V(A)-V(B))$ (i.e., a constant fixed by the boundary condition).
\end{remark}

We give the two missing proofs from the controlled path measure Section~\ref{sec:controlled_path_measure} below.

\KRR*

\begin{proof}
Fix a particular $s$. Using the fact that the operator $\mathcal{S}$ is linear in $v$ if we view it as a function $v_t(x)$ indexed by $t$, we derive the solution to KRR for each $t$, following section A.1.2 from \cite{nusken2024stein}. The adjoint satisfies 
\[ \langle c,\mathcal{S}_t(v_t)\rangle_N = \langle\mathcal{S}_t^*c, v_t  \rangle_{\mathcal{H}_k} \quad \forall v_t \in\mathcal{H}_k^d,c\in\mathbb{R}^N\]
implying for $c\in\mathbb{R}^N$, using the definition of $h_s$,
\[\mathcal{S}_t^*c = \frac{1}{N}\sum_{i=1}^N \left(\frac{1}{2}\nabla_{x_t^i} k(\cdot, x_t^i)-\frac{1}{2}k(\cdot,x_t^i)\left(b_t(x_t^i)-\frac{dx_t^i}{dt}\right)\right) c_i\, .\]
for any $c\in \mathbb{R}^N$. Therefore the solution (parametrized by $N,\lambda$) is given by
\begin{equation}
\label{eqn:optimal_v}
v_t^* = \mathcal{S}_t^*(\lambda I_{N\times N}+\mathcal{S}_t\mathcal{S}_t^*)^{-1} j_t=: \mathcal{S}_t^*c\, ,
\end{equation}
which takes $\{x_t^i\}_{i=1}^N \in\mathbb{R}^d$ as input and output a $\mathbb{R}^d$-valued vector. Above $j_t[i] =-J_t(x_t^i)+\frac{1}{N}\sum_{j=1}^N J_t(x_t^j), i\in [N]$. Here the linear system to be solved in \eqref{eqn:optimal_v} involving $\mathcal{S}_t\mathcal{S}_t^*$ that maps $\mathbb{R}^d\times \mathbb{R}^d$ to $\mathbb{R}$ can be evaluated to be
\begin{align*}
(\mathcal{S}_t\mathcal{S}_t^*)_{ij}&= -\frac{1}{4}\left(b_t(x_t^i)-\frac{dx_t^i}{dt}\right)^\top\nabla_{x_t^j} k(x_t^i, x_t^j)+\frac{1}{4}\left(b_t(x_t^i)-\frac{dx_t^i}{dt}\right)^\top k(x_t^i,x_t^j)\left(b_t(x_t^j)-\frac{dx_t^j}{dt}\right)\\
&+\frac{1}{4}\nabla_{x_t^i}\cdot \nabla_{x_t^j} k(x_t^i, x_t^j)-\frac{1}{4}\nabla_{x_t^i}k(x_t^i,x_t^j)^\top\left(b_t(x_t^j)-\frac{dx_t^j}{dt}\right),\, i \in [N], j \in [N]
\end{align*}
for the corresponding kernel $k(\cdot,\cdot)$. \eqref{eqn:optimal_v} is a linear system of size $N\times N$ given $N$ empirical samples/trajectories $\{x_t^i\}_{i=1}^N$ at each $t\in [T]$. The resulting $(v_t^*)_t$ from above can be used to update the drift $(b_t)_t$ and generate new trajectories for solving \eqref{eqn:KRR} again at a new $s$, with possibly MCMC steps in between.
\end{proof}

\Jarzynski*
\begin{proof} 
%\[w_j=\frac{e^{-W^{j,n}}}{\sum_{i=1}^M e^{-W^{i,n}}}\]
Fix a switching with $n:=s/\delta_s$ steps, the proof 
follows closely the derivation in \cite[Remark 4.5]{stoltz2010free}. The MH adjustment on top of the SPDE dynamics makes sure the estimator is \emph{unbiased}, i.e.,
\[\frac{Z_s}{Z_0} = \mathbb{E}\left[\exp\left(-\sum_{t=0}^{s/\delta_s}\delta_s \cdot J(X_t)\right)\right]\]
even with the $\delta_s$ algorithmic time discretization.  
To see this, note the probability to observe a sequence of path $X_0,X_1,\dots, X_n$ under the dynamics is
\[\Pi(dX)=\pi_0(dX_0)\prod_{i=1}^n P_{\pi_i}(X_{i-1},dX_i)\]
where the transition probability $P_{\pi_i}(X_{i-1},dX_i)$ leaves the path measure $\pi_i(x)\propto \exp(-I(x)-i\cdot\delta_s J(x)) =: \exp(-V_i(x))$ invariant. 
Therefore
\begin{align*}
&\mathbb{E}_{\{X_i\}_{i=0}^n}\left[\exp\left(-\sum_{i=0}^{n}\delta_s \cdot J(X_i)\right)\right]=\int \prod_{i=0}^{n-1}\exp(-V_{i+1}(X_i)+V_i(X_i))\, \Pi(dX)\\
&= \frac{1}{Z_0}\int \exp(-V_1(X_0))\prod_{i=1}^{n-1} \exp(-V_{i+1}(X_i)+V_i(X_i))\,\prod_{i=1}^{n} P_{\pi_i}(X_{i-1},dX_i)dX_0\\
&= \frac{Z_1}{Z_0} \int \prod_{i=1}^{n-1} \exp(-V_{i+1}(X_i)+V_i(X_i))\,\pi_1(dX_1) \prod_{i=2}^n P_{\pi_i}(X_{i-1},dX_i)
\end{align*}
since by the invariance of kernel $P_{\pi_i}$ on $\pi_i$:
\[\int \exp(-V_i(X_{i-1}))P_{\pi_i}(X_{i-1}, dX_i) dX_{i-1} = Z_{i}\pi_i(dX_i)\, .\]
Inductively using the same argument gives 
\[\mathbb{E}_{\{X_i\}_{i=0}^n}\Bigg[\underbrace{\exp\left(-\sum_{i=0}^{n}\delta_s \cdot J(X_i)\right)}_{e^{-W^n}}\Bigg]=\frac{Z_n}{Z_0}\, ,\]
for which we approximate by averaging $M$ empirical samples in practice. To report the average of a statistics $h$ over the path, using importance sampling we have
\[\frac{1}{Z_n}\int h(x) \pi_n(x) dx = \frac{\int h(X) \frac{\pi_n(X)}{\rho_n(X)} d\rho_n(X)}{\int \frac{\pi_n(X)}{\rho_n(X)}d\rho_n(X)} \approx \frac{\sum_{j=1}^M h(X_n^j) e^{-W^{j,n}(X^j)} }{\sum_{j=1}^M e^{-W^{j,n}(X^j)} }\]
with $M$ independent walkers / trajectories. At final step $n$ of the algorithm, each trajectory $j\in[M]$ has individual weights as 
\[w_j=e^{-W^{j,n}}\]
where $W$ denotes the work along the MH-adjusted SPDE dynamics.
\end{proof}

\paragraph{Alternative Equilibrium Schemes}

For \emph{exactly} following the $\pi_s$ curve, one may be tempted to learn the drift term in the SPDE \eqref{eqn:langevin-spde} as a replacement for $\delta_x \log Q(x)$, in order to enforce this requirement. This will give a method similar to \cite{nusken2024stein}, but here with $v_s$ mapping $\mathcal{C}([0,T],\mathbb{R}^d)$ to $\mathcal{C}([0,T],\mathbb{R}^d)$:   % Fokker-Planck for SPDE exists under assumptions on the drift. 
    \begin{equation}
    \label{eqn:stein}
    ``\nabla \cdot(\pi_s(x) v_s(x))\overset{!}{=}\pi_s(x)\left(J(x)-\mathbb{E}_{\pi_s(x)}[J(x)]\right)=\partial_s \pi_s"
    \end{equation}% this is a continuity equation
    or equivalently written as the Stein equation (which is oblivious to normalizing constant)
    \begin{equation}
    \label{eqn:stein_rewritten}
``\nabla \log\pi_s(x)\cdot v_s(x)+\nabla\cdot v_s(x) \overset{!}{=} J(x)-\int J(x) d\pi_s"
\end{equation}
and update trajectories as 
    \[dx_s/ds = v_s(x_s),\; x_0\sim \pi_0.\] 
This method will directly move the trajectories instead of updating the drifts $b^s$ (as we do in Section~\ref{sec:track_b_transport}), and is closer to an interacting particle system, if we heuristically work with an implementation on \eqref{eqn:stein_rewritten} as (with each $t\in[T]$):
\begin{equation}
\label{eqn:stein_rewrite}
\arg\min_{v_s} \; \sum_{i=1}^N \left[\delta_x \log\pi_s(x^i)[t] \cdot v_s^t(x_t^i)+\nabla\cdot v_s^t(x_t^i) -\left(  J_t(x_t^i)  -\frac{1}{N}\sum_{n=1}^N  J_t(x_t^n)\right)\right]^2\,  \;\text{ for }\; x^i\sim \pi_s
\end{equation}
\[x_t^i \leftarrow x_t^i+\delta_s\cdot v_s^t(x_t^i)\quad\quad \forall i\in[N]\]
that moves trajectories so that the updated $\{x_t^i\}_{t\leq T}^{i=1,\dots, N}\sim \pi_{s+\delta_s}$. Note that the $T$ problems are coupled through terms like $d^2 x_t^i/dt^2$ that appear in \eqref{eqn:stein_rewrite} through the $1$st term. However, solving \eqref{eqn:stein_rewrite} means we'd need to keep $N\times \mathbb{R}^d\times T$ in memory, compared to our NN approach in Section~\ref{sec:track_b_transport} that keeps $\#\text{NN parameters}$ in memory. If one considers Fisher-Rao/Birth-Death on the trajectories with \eqref{eqn:stein} (which attach weights to the trajectories based on $J$), the memory footprint is similar. In Algorithm \ref{alg:transport_b} we pay for compute in terms of re-generating trajectories after each annealing step, but recover the $b_t^*$ drift prescribing the behavior of the dynamics as output. 

On the other hand, using \cite{albergo2024nets} with a PINN objective (for each $t\in[T]$) imposing \eqref{eqn:stein_rewritten} does not need interacting particle system, but will require re-simulating paths with the updated $v_s$ at each step and solve a \emph{two-parameter} loss evaluated on the trajectories. We remark that Algorithm 1 from \cite{albergo2024nets} also leverages explicit ``annealing" w.r.t the changing potential (assumed known), in addition to the learned transport $v_s$ part.  % $-\nabla V_t$ 

% \begin{remark}
% If the control $\alpha_t^*$ solves
% \begin{equation}
% \label{eqn:ais_pde}
% \nabla\cdot \alpha_t^*-\nabla V_t^\top\alpha_t^*-\partial_t V_t+\rho_t[-\partial_t V_t]=0
% \end{equation}
% then the SDE given as
% \[dX_t = \alpha_t(X_t)-\nabla V_t(X_t) dt+\sqrt{2}dW_t\]
% has time marginal as $\rho_t\propto e^{-V_t}$. Moreover, the deterministic ODE $\dot{X_t}=\alpha_t^*(X_t)$ with initial $X_0\sim \rho_0$ will follow $X_t \sim \rho_t$ for all $t$. This is the basis of PINN and AIS in \cite{arbel2021annealed}. Solution to this PDE \eqref{eqn:ais_pde} is not unique, however. 
% \end{remark}

\section{General Reduction and Eulerian Solver}
\label{app:eulerian}

We begin with a remark on the general approach adopted in Section~\ref{sec:WGF}, followed by the proof of our jointly convex reformulation involving $q_t,m_t$ from Section~\ref{sec:approximation}.

\begin{remark}
There are some degrees of freedom in the objective \eqref{eqn:loss_path}/\eqref{eqn:loss_b} by picking different decomposition of $Q^*$, e.g., using Brownian Bridge as $\mathbb{P}^{\text{ref}}$ with $J$ from \eqref{eqn:TPS_anneal} and $u_t^{\text{ref}}(x_t)$ from \eqref{eqn:BB_SDE}, in addition to the choice of $\mathbb{P}^{\text{ref}}$ as a prior, and $J$ the likelihood given by observations as in \eqref{eqn:J_functional}.  % =(B-x_t)/(T-t), still \sqrt{2} as diffusion coefficient
\end{remark}

\rewrite*
\begin{proof}
We give an argument here that is slightly different from Eqn (4.31) in \cite{chen2021stochastic} and \cite{gentil2017analogy}: let
\[dX_t = u_t^{\text{ref}}(X_t)dt +\sqrt{2}dW_t\]
with any fixed, arbitrary initial distribution, where we denote its marginal density as $\bar{q}_t$. Now rewriting the constraint in \eqref{eqn:BB}, we have (introduce the variable $v_t = b_t+u_t^{\text{ref}}-\nabla \log q_t$) 
\[\partial_t  q_t = -\nabla\cdot(q_t v_t),\; q_0=\rho_0 \, .\]
Similarly for the objective in \eqref{eqn:BB}, a change of variables gives
\begin{align*}
    &\int_{\mathbb{R}^d} \left[\frac{1}{4}\|b_t(x_t)\|^2  + J_t(x_t)\right] q_t(x_t) \, dx \\
    &= \int_{\mathbb{R}^d} \left[\frac{1}{4} \|v_t(x_t)-u_t^{\text{ref}}(x_t)+\nabla\log \bar{q}_t(x_t)+\nabla\log q_t(x_t)-\nabla\log \bar{q}_t(x_t)\|^2+ J_t(x_t)\right] q_t(x_t)dx
\end{align*}
Let $\bar{v}_t=u_t^{\text{ref}}-\nabla \log \bar{q}_t$. The cross term organizes into 
\[\frac{1}{2}\int \left[(v_t-\bar{v}_t)^\top(x) \nabla \log \frac{q_t}{\bar{q}_t}(x)\right] q_t(x) dx=\frac{1}{2}\frac{d}{dt}\text{KL}(q_t \Vert\bar{q}_t)\]
which upon time integrating over $[0,T]$ gives $\frac{1}{2}\text{KL}(q_T \Vert \bar{q}_T)$ if $q_T$ is not fixed and $\bar{q}_0=q_0=\rho_0$. After identifying $m_t=q_t v_t$, this gives \eqref{eqn:relative_fisher_obj}. 

If $u_t^{\text{ref}}=-\nabla V$ is not time-varying, and assuming we initialize $\bar{q}_0\propto e^{-V}=: \nu$ at stationary, then $\bar{q}_t=\nu$ is fixed, and we have
\[\frac{1}{2}\int \left[v_t(x)^\top\nabla \log \frac{q_t}{\nu}(x)\right] q_t(x) dx=\frac{1}{2}\frac{d}{dt}\text{KL}(q_t \Vert \nu)\, ,\]
therefore time-integrating the cross term gives us $\frac{1}{2}\text{KL}(q_T\Vert \nu)-\frac{1}{2}\text{KL}(\rho_0\Vert\nu)$, where the second part vanishes, which gives \eqref{eqn:simplified_objective}. 
\end{proof}

Before offering the details of our Eulerian solver from Section~\ref{sec:low_dim_eulerian}, we begin with some motivations.

If we split \eqref{eqn:relative_fisher_obj} as follows, one can view the highlighted red part as encoding the geometry (i.e., defining appropriate notion of shortest distance between $2$ distributions over the probability measure space) and the rest as the ``objective" function for the $(q_t)_t$ dynamics.
\begin{align}
&\min_{q_t,m_t} \;{\color{red}{\int_0^T \int_{\mathbb{R}^d} \left[\frac{1}{4}\left\|\frac{m_t}{q_t}(x_t)-(u_t^{\text{ref}}-\nabla \log\mathbb{P}_t^{\text{ref}})(x_t)\right\|^2\right] q_t(x_t)  dx dt}} \nonumber\\
&+\int_0^T \int_{\mathbb{R}^d}\left[\frac{1}{4}\left\|\nabla \log\frac{q_t}{\mathbb{P}_t^{\text{ref}}}(x_t)\right\|^2 +J_t(x_t)\right] q_t(x_t) dx dt+ \frac{1}{2}\int_{\mathbb{R}^d} q_T(x_T) \log \frac{q_T}{\mathbb{P}_T^{\text{ref}}}(x_T) \, dx_T \nonumber\\
&\text{s.t. } \; {\color{red}{\partial_t  q_t = -\nabla\cdot m_t,\; q_0=\rho_0}}  \nonumber
\end{align}

%Start with another change of variable, the following rewriting on \eqref{eqn:relative_fisher_obj} will simplify our problem and makes it closer to the formulation in \eqref{eqn:simplified_objective}. 

% \begin{align}
% &\min_{q_t,w_t} \;{\color{red}{\int_0^T \int_{\mathbb{R}^d} \frac{1}{4}\left\|\frac{w_t}{q_t}(x_t)\right\|^2q_t(x_t)  dx dt}} +\int_0^T \int_{\mathbb{R}^d} \left[\frac{1}{4}\left\|\nabla \log\frac{q_t}{\mathbb{P}_t^{\text{ref}}}(x_t)\right\|^2 +J_t(x_t)\right] q_t(x_t)  dx dt \nonumber\\
% & \quad + \frac{1}{2}\int_{\mathbb{R}^d} q_T(x_T) \log \frac{q_T}{\mathbb{P}_T^{\text{ref}}}(x_T) \, dx_T \nonumber\\
% &\text{s.t. } \;{\color{red}{\partial_t  q_t = -\nabla\cdot (w_t+q_t (u_t^{\text{ref}}-\nabla \log\mathbb{P}_t^{\text{ref}})),\; q_0=\rho_0}} 
% \end{align}
More specifically, since the first part depends on both $q_t,\dot{q}_t=m_t/q_t = v_t$, while the remaining part depends on $q_t$ only, for a given ground cost $L(\cdot,\cdot)$ over Euclidean space, if we define 
\begin{equation}
\label{eqn:W_L}
W_L(q_0,q_h):=\inf_{\rho_s} \inf_{v_s} \int_0^h \int_{\mathbb{R}^d} L(x_s,v_s)\rho_s(x_s)\, dx_s\, ds \quad \text{s.t.} \; \dot{\rho}_s = -\nabla \cdot(\rho_s v_s), \rho_0=q_0,\rho_h = q_h 
\end{equation}
then we may propose to iteratively solve
\begin{equation}
\label{eqn:WGF_variant}
q_{t+h}\leftarrow \arg\min_{q\in \mathcal{P}(\mathbb{R}^d)}\; W_L(q,q_{t})+h \cdot \underbrace{\int_{\mathbb{R}^d} \Big[\frac{1}{4}\Big\|\nabla \log\frac{q}{\mathbb{P}_{t+h}^{\text{ref}}}(x)\Big\|^2 +J_{t+h}(x)\Big] q(x) dx}_{F(q)}.
\end{equation}
for time stepping across $t\in[0,T]$ with time step $h$. 
% $s\in[0,1]$ is a fictitious time, only chosen for 
% , and unrelated to the physical time $t\in[0,T]$

Choosing $L(x_s,v_s)=\frac{1}{2}\|v_s(x_s)\|^2$ in \eqref{eqn:W_L} recovers the $\mathcal{W}_2$ OT distance used in the reduction \eqref{eqn:WGF} for \eqref{eqn:simplified_objective}. One can view \eqref{eqn:W_L} as a dynamic definition of a new metric. Here, again, we use the interpretation that given a curve of optimal densities for \eqref{eqn:relative_fisher_obj}, the optimization over $v_t$ (equivalently $m_t$), done in the red part, is simply solving an OT problem between the neighboring marginals, using a special cost. This elimination of $v_t$ (equivalently $m_t$) turns the problem into a single-variable optimization over the $(q_t)_t$ curve only and provides a good approximation when $h$ is small (we elaborate on this in Appendix~\ref{app:entropic_ot}).
% In some sense our approximation (and loss) only comes in when we trade the joint minimization over $\mathcal{P}(\mathbb{R}^d)^K$ for a sequence of $\mathcal{P}(\mathbb{R}^d\times\mathbb{R}^d)$ optimization over the neighboring pairs (c.f. Remark~\ref{rmk:multi-marginal-SB} for an elaboration on this point)

\subsection{Eulerian Solver}
Now we are ready to proceed with the approximation \eqref{eqn:WGF_variant} and the definition \eqref{eqn:W_L} to design our Eulerian solver. We preface with a definition.
\begin{definition}
An important convex set $\mathcal{K}$ that will be repeatedly used is 
\[\mathcal{K}:=\left\{(a,b)\in \mathbb{R}\times\mathbb{R}^d\colon a+\frac{1}{2}\|b\|^2\leq 0\right\}\, .\] 
The convex conjugate of the characteristic function of the set $\chi_\mathcal{K}^*(a,b) = \sup_{(a,b)\in \mathcal{K}}\{ar+b^\top m\} = \frac{\|m\|^2}{2r}$ for $(r,m)\in \mathbb{R}\times\mathbb{R}^d$ is its support function.
\end{definition}

%\begin{lemma}[Eulerian Solver]
%See in particular discretization $(5.1)-(5.2)$ therein. % this approach will scale as $N^d$ once we discretize
%\end{lemma}

%\begin{proof}
Without loss of generality, consider the first $q_0\rightarrow q_h$ step. We approximate the $W_L(q_0,q_h)$ term, similarly as done in \cite{carlier2023wasserstein} as
\begin{align*}
W_L(q_0,q_h)\approx \inf_{\bar{m}} \Big\{\int \frac{\|\bar{m}\|^2}{2 q_0}(x) &-\bar{m}^\top(x)(u_0^{\text{ref}}-\nabla \log\mathbb{P}_0^{\text{ref}})(x) \\
&+\frac{1}{2}\|u_0^{\text{ref}}(x)-\nabla \log\mathbb{P}_0^{\text{ref}}(x)\|^2 q_0(x)\, dx \colon q_h-q_0 = -\nabla\cdot \bar{m} \Big\}
\end{align*}
This relies on the fact that if $q_0,q_h$ are close, the momentum $m_s$ in the definition of $W_L(q_0,q_h)$ becomes almost $s$-independent, which we denote by $\bar{m}$ above.

%$\rho_1-\rho_0+\nabla\cdot(\bar{m})=0$. With $\bar{w}=\bar{m}-\rho_0(u_t^{\text{ref}}-\nabla \log\mathbb{P}_t^{\text{ref}})$, we solve for $\bar{w}$ and $\rho_1$.

 Now introduce notations that group known/fixed quantities together: let
\[g(x) = u_0^{\text{ref}}(x)-\nabla \log\mathbb{P}_0^{\text{ref}}(x), \; \; f_h(x)=2\cdot J_h(x)+\frac{h}{2}\|\nabla \log \mathbb{P}_h^{\text{ref}}(x)\|^2+h\cdot \Delta \log \mathbb{P}_h^{\text{ref}}(x)\]
and using the representation of Fisher information in \eqref{eqn:fisher_rewrite_2}, the problem reduces to solve the following convex problem (for some constant $C$)
\begin{align*}
\inf_{q_h,\bar{m}} \int &\frac{1}{2}\frac{\|\bar{m}\|^2}{q_0}(x)-\bar{m}^\top(x)g(x) +f_h(x) q_h(x)dx + C\\
&+\sup_{\phi_h,(\alpha,\beta)\in \mathcal{K}}\int h\cdot (\alpha(x)q_h(x)-\nabla\cdot\beta(x) q_h(x)) + \phi_h(x)(q_h(x)-q_0(x)+\nabla\cdot \bar{m}(x)) dx\, .
\end{align*}
It implies from optimality condition on $q_h,\bar{m}$ that
\[\bar{m}(x) = q_0(x)[g(x)+\nabla \phi_h(x)], \quad \alpha(x) = -\frac{1}{h}f_h(x)+\nabla\cdot\beta(x)-\frac{1}{h}\phi_h(x)\, .\]
Plugging it back in and flipping the sign, we end up with solving
%\begin{subequations}
\begin{equation}
\begin{aligned}
\inf_{\phi_h,\beta}\;\; & \frac{1}{2}\int_{\mathbb{R}^d} \|\nabla \phi_h(x)+g(x)\|^2 q_0(x) dx+\int_{\mathbb{R}^d} \phi_h(x) q_0(x) dx \label{eqn:composite-1}\\
&\text{s.t}\;\; \left(\nabla\cdot\beta(x)-\frac{1}{h}f_h(x)-\frac{1}{h}\phi_h(x),\beta(x) \right)\in \mathcal{K} \quad \forall x\in \mathbb{R}^d \, .
 % +\frac{\|\beta(x)\|^2}{2}\leq 0
  \end{aligned}
\end{equation}
%\end{subequations}
And the resulting updated $q_h,v_h$ can be recovered as
\begin{equation}
\label{eqn:q_on_grid}
q_h(x)\approx q_0(x)-\nabla\cdot(q_0(x)[\nabla\phi_h(x)+g(x)]) \quad\text{and}\quad v_h(x)\approx \nabla\phi_h(x)+g(x)\,.
\end{equation}
% Here $f(x),g(x)$ are functions of the reference process and/or $J(x;y)$ term that are known and considered fixed. 
Problem \eqref{eqn:composite-1} can now be solved using the Chambolle-Pock algorithm similar to what's done in \cite{carlier2023wasserstein}. As a splitting scheme for alternatively updating the 2 parts of the convex objective:
\[\min_z \; \Phi(z)+\Psi(\Lambda z)\]
we write for $z=(\phi_h,\beta)$ and $\Lambda z = (\nabla\cdot\beta-\frac{1}{h}\phi_h,\beta)$:
\[\Phi(z)=\frac{1}{2}\int_{\mathbb{R}^d} \|\nabla \phi_h(x)+g(x)\|^2 q_0(x) dx+\int_{\mathbb{R}^d} \phi_h(x) q_0(x) dx, \]
\[\Psi(\Lambda z)=\Psi(z_1,z_2)=\begin{cases}
0, &\; \text{if } (z_1(x),z_2(x))\in\mathcal{K}_{1/h\cdot f_h}, \text{for each } x \in \mathbb{R}^d \\
\infty, &\; \text{otherwise}
\end{cases}\; \]
where we overload the notation so $\mathcal{K}_c =\left\{(a,b)\in \mathbb{R}\times\mathbb{R}^d\colon a+\frac{1}{2}\|b\|^2\leq c\right\}$ for some known constant $c$.

In order to run the iterative update prescribed by Chambolle-Pock (for $k\geq 1$, some stepsize $\sigma,\delta > 0$, and initialization $y^0=u^0=z^0$)
\begin{equation}
\begin{aligned}
\label{update:CP}
y^{k+1} &= \text{prox}_{\sigma \Psi^*} (y^k+\sigma \Lambda u^k)\\
z^{k+1} &= \text{prox}_{\delta \Phi} (z^k-\delta \Lambda^* y^{k+1} )\\
u^{k+1} &= 2z^{k+1}-z^k
\end{aligned}
\end{equation}
 we only need to know the prox operator for $\Phi, \Psi$; since $\text{prox}_{\Psi^*} = I_d-\text{prox}_\Psi$ follows easily. The prox operator for $\Phi(z)$ involves solving a linear system on $\phi_h$, which returns $\text{prox}_{\delta \Phi}(\phi_0,\beta_0)=(\phi,\beta_0)$ where $\phi$ solves at each $x$ on the grid:
% \[\phi+\delta\cdot D_\tau^*((D_\tau \phi+g)q_0) = \phi_0-\delta\cdot q_0\]
\[\phi+\delta\cdot D_\tau^*(D_\tau \phi q_0) = \phi_0-\delta\cdot q_0-\delta\cdot D_\tau^*(g q_0)\, .\]
Above $D_\tau$ is a finite difference operator with grid spacing $\tau$ in each direction, and $-D_\tau^*$ approximates the divergence.
The other prox operator for $\Psi$ is a projection onto the set $\mathcal{K}$ that is separable across space (turns into a $1$D problem) and explicit \cite{benamou2000computational}. It remains to work out $\Lambda^*$ for running the update \eqref{update:CP}: since $\langle y,\Lambda z\rangle = \langle \Lambda^* y, z\rangle$, we have $\Lambda^*y = (-1/h \cdot y_1,-D_\tau y_1+y_2)$ in this case.

We have seen how to advance $t=0\rightarrow t=h$ for approximately solving \eqref{eqn:relative_fisher_obj} -- the whole process can be sequentially repeated until time $T$ since we have the density $q(x)$ explicitly computed (and maintained) on a fixed grid over $\mathbb{R}^d$ at each step (c.f. \eqref{eqn:q_on_grid}). %\qijia{in the last step ...}
%\end{proof}

% \begin{remark}
% In fact the same conceptual idea as in \eqref{eqn:WGF}, due to separability over $t$, can be applied in reverse time from $T\rightarrow 0$ in \eqref{eqn:WGF} as well if we have $\rho_T$ as constraint. It amounts to a different discretization since \eqref{eqn:simplified_objective} is a deterministic dynamics (in measure space). This suggests the possibility of doing a backward pass after the forward one for refinement, or use the parameterization from Appendix D of \cite{neklyudov2023computational} starting from both ends. 
% \end{remark}

\section{Additional Details for the Two Lagrangian Methods from Section~\ref{sec:lagrangian_methods}}
\label{app:sec_4}

In this section, we give the details for our two Lagrangian methods from Section~\ref{sec:lagrangian_methods} -- we begin with the Fisher Information Functional estimator used in Section~\ref{sec:push_forward_map} and Algorithm~\ref{alg:jko}.

\FIestimator*
\begin{proof}
Notice the optimization \eqref{eqn:fisher_rewrite_2} becomes essentially 1D and separable across grid points, if we approximate 
\[\hat{q}_t(x)=\frac{1}{n}\sum_{i=1}^n \delta_{x_i}\approx \frac{1}{n}\sum_{i=1}^n \delta_{x_i}(x) * \mathcal{N}(0,\sigma^2 I)=\frac{1}{n}\sum_{i=1}^n \mathcal{N}(x_i,\sigma^2 I)(x)\;\text{ for } \{x_i\}_{i=1}^n\sim q_t\] 
with a bandwidth $\sigma$, so
\[\nabla \hat{q}_t(x) \approx \frac{1}{n}\sum_{i=1}^n  -\frac{x-x_i}{\sigma^2} \mathcal{N}(x_i,\sigma^2 I)(x) \, .\]  
This makes $\nabla \hat{q}_t$, and therefore $R(\hat{q}_t)$ amenable to estimation given i.i.d samples from $q_t$. For solving \eqref{eqn:fisher_rewrite_1}, we sample $y_j$'s from Gaussians centered at $\{x_i\}_{i=1}^n$ where $x_i\sim q_t(x)$, and return 
\[\int \nabla \hat{q}_t(y) \beta(y) dy \approx \frac{1}{n}\sum_{i=1}^n \int -\frac{y-x_i}{\sigma^2} \mathcal{N}(x_i,\sigma^2 I)(y) \beta(y) dy\approx \frac{1}{n}\sum_{i=1}^n -\frac{y_j-x_i}{\sigma^2}\beta(y_j) ,  \] % \text{repeat $m$ times, average}
\[\int \alpha(y)\hat{q}_t(y) dy = \frac{1}{n}\sum_{i=1}^n \int \alpha(y) \mathcal{N}(x_i,\sigma^2 I)(y) dy\approx \frac{1}{n}\sum_{i=1}^n \alpha(y_j) \]
so estimating $R(\hat{q}_t)$ in turn becomes solving on the (random) $n$-point grid $y_i\sim  \mathcal{N}(x_i,\sigma^2 I)$
\begin{equation}
\label{eqn:random_FI_estimator}
R(\hat{q}_t)\approx \sup_{(\alpha_i,\beta_i) \in \mathbb{R}\times \mathbb{R}^d\in \mathcal{K}, \forall i}\;\; \frac{1}{n}\sum_{i=1}^n \left\{\left(-\frac{y_i-x_i}{\sigma^2}\right)^\top \beta_i+\alpha_i\right\} =\frac{1}{n}\sum_{i=1}^n \frac{\|y_i-x_i\|^2}{2\sigma^4}\, . 
\end{equation}
This can also be seen as a consequence of Stein's lemma: $\mathbb{E}[(X-\mu)^\top\beta(X)]=\mathbb{E}[\nabla\cdot \beta(X)]$ for $X\sim\mathcal{N}(\mu,I)$ on \eqref{eqn:fisher_rewrite_2}. In practice, a scheme that involves multiple samples at each $x_i$ point as $y_i^j\sim  \mathcal{N}(x_i,\sigma^2 I), j\in[m]$
\[R(\hat{q}_t)\approx \frac{1}{n}\sum_{i=1}^n \left(\frac{1}{m}\sum_{j=1}^m\frac{\|y_i^j-x_i\|^2}{2\sigma^4}\right)\]
maybe more accurate. 
\end{proof}

We'd like to emphasize that the estimator given in Lemma~\ref{lem:FI_estimator} is much simpler than what one would get by plugging in a density estimator $\hat{q}_t$ for $q_t$, and compute
\[R(q_t)\approx R(\hat{q}_t)=\frac{1}{2}\int \|\nabla \log \hat{q}_t(x)\|^2 \hat{q}_t(x) dx\]
directly. And this convenience crucially hinges on the variational representation \eqref{eqn:fisher_rewrite_2}.

\subsection{Missing Proofs for Section~\ref{sec:IPS}}
\label{app:IPS}

The core result on which our interacting particle system is based off is given below.
\ER*
\begin{proof}
We multiply through by $2$ to make the final expression cleaner. Writing out the Lagrangian for \eqref{eqn:simplified_objective}, we have using integration by parts:
\begin{align*}
\min_{q_t, v_t} \, \max_{\lambda_0,\theta_t}\; &\int_0^T \int \left[\frac{1}{2}\|v_t(X_t)\|^2+\frac{1}{2}\|\nabla\log \frac{q_t}{\nu}(X_t)\|^2+2J_t(X_t)\right]q_t(X_t) dX_t dt\\
&-\int_0^T \int (\partial_t \theta_t(X_t)\cdot q_t(X_t)+\nabla \theta_t(X_t)^\top v_t(X_t) q_t(X_t)) dX_t dt\\
&+\int \theta_T(X_T)q_T(X_T) dX_T-\int\theta_0(X_0)q_0(X_0) dX_0 \\
&+ \int \lambda_0(X_0)(q_0(X_0)-\rho_0(X_0)) dX_0 + \int q_T(X_T)\log\frac{q_T}{\nu}(X_T) dX_T
\end{align*}
Now the optimality condition for $v_t$ implies
\[v_t(X_t) = \nabla \theta_t(X_t)\]
and optimality condition for $q_t$ can be worked out using the first variation of $\delta I(q_t)$ from \cite{gianazza2009wasserstein}, where $I(q_t;\nu)$ is regarded as a potential energy involving the density:
\begin{equation}
\label{eqn:opt_cond_q}
\delta_q I(q_t;\nu)(X_t)+\frac{1}{2}\|v_t(X_t)\|^2+2J_t(X_t)-\partial_t \theta_t(X_t)-\nabla \theta_t(X_t)^\top v_t(X_t)=0
\end{equation}
which putting together with the previous equation yields the claim. Other parts are straightforward.

The continuity equation, which comes from the optimality condition for $\theta_t$, implies that along the optimal dynamics, particle moves with velocity field $\nabla \theta_t(X_t)$, which upon taking another time derivatives suggests the particles accelerate as
\[\ddot{X}_t=\nabla \frac{\partial \theta_t}{\partial t}(X_t)+\nabla^2 \theta_t(X_t)\nabla \theta_t(X_t)\, .\]
Direct calculation gives the equivalence to \eqref{eqn:Fisher_WGF} using the derived PDE optimality condition \eqref{eqn:opt_cond_q}. 
\end{proof}

Armed with this, we can simply run a deterministic $2$nd order ODE dynamics for simulating particle trajectories with the force field $-\nabla F(q_t)$ from Proposition~\ref{prop:euler_lagrange}, which consists of a linear part involving $J$ and a nonlinear part involving $I(q;\nu)$:
\begin{align*}
X_h= X_0+h\nabla \theta_0(X_0)-&\int_0^h\int_0^s \nabla F(q_t)(X_t)\, dt ds=X_0+h\nabla \theta_0(X_0)-\int_0^h \nabla F(q_t)(X_t)(h-t)\, dt \\
\nabla \theta_h (X_h) &= \nabla\theta_0(X_0)-\int_0^h \nabla F(q_t)(X_t) dt\, . 
\end{align*}
Upon time discretization it allows us to trace the optimal density curve.

% \begin{lemma}
%  However, for getting the $b_t^*$, one needs a final action matching done using the evolution of samples $\{X_t^i\}_{t\in[0,T], i\in[n]}$.
% \end{lemma}
% \begin{proof}

\paragraph{Kernelizing the vector field} 
With the force field depending on $q_t$, we resort to an interacting particle system for executing this proposal. This is again, where the relation \eqref{eqn:fisher_rewrite_2} for the Fisher Information functional becomes extremely useful, since na\"{\i}vely it is not clear how one may estimate terms such as $\nabla (\Delta\sqrt{q}/\sqrt{q})$. In particular \eqref{eqn:fisher_rewrite_2} shows that $\delta R(q_t)(x) = \alpha^*(x)-\nabla \cdot\beta^*(x)$ from Danskin's Theorem, where $\alpha^*(x)=-\frac{1}{2}\|\nabla \log q_t(x)\|^2, \beta^*(x)=\nabla\log q_t(x)$. Therefore the kernelized vector field evaluated at any $z\in\mathbb{R}^d$ is %(suppose kernel $\mathcal{K}$ is Gaussian)
\begin{align}
&[\mathcal{K}_{q_t}\nabla \delta R(q_t)](z) =\int \mathcal{K}(x-z)\nabla [\alpha^*(x)-\nabla \cdot\beta^*(x)]q_t(x) dx \nonumber\\
&= -\int \alpha^*(x) \nabla  (\mathcal{K}(x-z)q_t(x)) dx + \int \nabla\cdot \beta^*(x)  \nabla  (\mathcal{K}(x-z)q_t(x)) dx \nonumber\\
&= - \int \alpha^*(x) (\nabla \mathcal{K}(x-z) q_t(x)+\mathcal{K}(x-z) \nabla q_t(x)) dx - \int \beta^*(x)\nabla\cdot [\nabla  (\mathcal{K}(x-z)q_t(x))] dx \nonumber\\
&= - \int \alpha^*(x) (\nabla \mathcal{K}(x-z) q_t(x)+\mathcal{K}(x-z) \nabla q_t(x)) dx \label{eqn:KWGF}\\
& - \int\beta^*(x) \nabla\cdot \nabla \mathcal{K}(x-z) q_t(x)+ 2\beta^*(x) \nabla \mathcal{K}(x-z)^\top \nabla q_t(x)dx-\int\beta^*(x) \mathcal{K}(x-z)\nabla\cdot(\nabla q_t(x)) dx \nonumber
\end{align}
This suggests an implementation as 
\begin{align*}
X_{t+h}^i&=X_t^i +h V_t(X_t^i) + \frac{h^2}{2} \cdot \left(\mathcal{K}\nabla \delta R(\hat{q}_t)(X_t^i) +  2\nabla J_t(X_t^i)\right),\;\; i\in[n]\\
V_{t+h}(X_{t+h}^i)&=V_t(X_t^i) + h \cdot \left(\mathcal{K}\nabla \delta R(\hat{q}_t)(X_t^i)+2\nabla J_t(X_t^i)\right)
\end{align*}
for 
\[\hat{q}_t(x)=\frac{1}{n}\sum_{i=1}^n \delta_{x_i}\approx \frac{1}{n}\sum_{i=1}^n \mathcal{N}(x_i,\sigma^2 I)(x)\;\text{ with } x_i\sim q_t\]
so the particle system with $n$ particles will approximately follow the optimal $(q_t)_{t\leq T}$ curve with stepsize $h$. 

In \eqref{eqn:KWGF}, earlier calculation from \eqref{eqn:random_FI_estimator} shows that e.g., $\int \alpha^*(x) \nabla \mathcal{K}(x-z) q_t(x)dx$ can be approximated as
\begin{equation}
\label{eqn:estimate_kernel_VF}
\frac{1}{n}\sum_{i=1}^n \underbrace{\int \alpha^*(x)\nabla \mathcal{K}(x-z)\mathcal{N}(x_i,\sigma^2 I)(x) dx}_{\mathbb{E}_{y\sim \mathcal{N}(x_i,\sigma^2 I)}[\alpha^*(y)\nabla\mathcal{K}(y-z)]}\approx \frac{1}{n}\sum_{i=1}^n \left(\frac{1}{m}\sum_{j=1}^m -\frac{\|y_i^j-x_i\|^2}{2\sigma^4} \nabla\mathcal{K}(y_i^j-z)\right)
\end{equation}
since on a random grid $\{y_i^j\}_j\sim \mathcal{N}(x_i,\sigma^2 I)$ with $x_i\sim q_t$, we showed
\[\alpha^*(y_i^j)=-\frac{\|y_i^j-x_i\|^2}{2\sigma^4},\;\; \beta^*(y_i^j)=\frac{x_i-y_i^j}{\sigma^2}\, .\]
The only term that's not in terms of $q_t,\nabla q_t$ in \eqref{eqn:KWGF} is the last one --  the rest can be similarly estimated as above. However, one can show that the divergence term also takes a rescaled Gaussian form: 
\begin{align*}
\nabla\cdot \nabla \hat{q}_t(x) &= \frac{1}{n}\sum_{i=1}^n  -\nabla\cdot\left(\frac{x-x_i}{\sigma^2} \mathcal{N}(x_i,\sigma^2 I)(x) \right)\\
&= \frac{1}{n}\sum_{i=1}^n -\frac{d}{\sigma^2} \mathcal{N}(x_i,\sigma^2 I)(x)+\frac{1}{n}\sum_{i=1}^n \frac{\|x-x_i\|^2}{\sigma^4} \mathcal{N}(x_i,\sigma^2 I)(x) \, ,%\\
%&= -\frac{d}{\sigma^2}\hat{q}_t(x) 
\end{align*}
which makes the last term expressible as an expectation w.r.t a Gaussian mixture density -- hence readily estimable provided samples from $\{y_i^j\}_j\sim \mathcal{N}(x_i,\sigma^2 I)$ for $x_i\sim q_t$, as done in \eqref{eqn:estimate_kernel_VF}. 

The additional KL part needed for the update at $T$ can be kernelized as 
\begin{align*}
\int\mathcal{K}(x_T-z) \nabla &\log\frac{q_T}{\nu}(x_T) q_T(x_T) \, dx_T =\int -\nabla_1\mathcal{K}(x_T-z) q_T(x_T) + \mathcal{K}(x_T-z) \nabla V(x_T)q_T(x_T)\, dx_T\,,
\end{align*}
followed by replacing $q_T$ with $\hat{q}_T$. This is similar to what's done in SVGD \cite{liu2017stein}.

% \end{proof}

Compared to Algorithm~\ref{alg:jko} that leverages push-forward maps, this particular approach, although also evolves a density over time through space, does not require NN training. Other particle methods for implementing dynamics in Wasserstein space can be found in e.g., \cite{carrillo2015numerical}. We note that most of the literature focuses on Wasserstein Gradient Flow (\`a la JKO), whereas here we consider $2$nd-order dynamics.

\begin{remark}
% Redefining $q\mapsto q/\nu$ with respect to a reference measure $\nu\propto e^{-V}$ gives the relative Fisher functional in \eqref{eqn:simplified_objective}, whose first variation is 
% \[2\frac{\Delta \sqrt{q}}{\sqrt{q}}-2\frac{\Delta\sqrt{\nu}}{\sqrt{\nu}}=2\frac{\Delta \sqrt{q}}{\sqrt{q}}-\left(\frac{1}{2}\|\nabla V\|^2-\Delta V\right)\] 
% The additional part does not incur significant compute for
% \[\ddot{X}_t = \nabla\left(2\frac{\Delta \sqrt{q}}{\sqrt{q}}-\left(\frac{1}{2}\|\nabla V\|^2-\Delta V\right)+J_t\right)\]
% % \[\partial_t q + \nabla\cdot\left(q\nabla\left(2\frac{\Delta \sqrt{q}}{\sqrt{q}}-\left(\frac{1}{2}\|\nabla V\|^2-\Delta V\right)+J_t\right)\right) = 0\]
% with access to $\nabla V$. 
%It is clear in this case, as $T\rightarrow \infty$, asymptotically (assuming $\nu$ is log-concave) if there's
%
A rewriting of the acceleration field makes it clear what the driving forces behind the dynamics are: 
\[\ddot{X}_t = \nabla_{\mathcal{W}_2} \left(\|\nabla_{\mathcal{W}_2} \text{KL}(q\Vert \nu)\|_q^2+\mathbb{E}_q[J]  \right)(X_t)\, .\]
Since the squared norm of the $\mathcal{W}_2$-gradient for the KL divergence is the relative Fisher Information, we expect the particles to undergo rapid speed change when either $q$ differs significantly from $\nu$ or if the external potential $J$ is exerting large force.  
%\qijia{Second order calculus $\approx \nabla^2_{\mathcal{W}_2} \text{KL}(q\Vert \nu) \nabla_{\mathcal{W}_2} \text{KL}(q\Vert \nu)$? }
\end{remark}

\section{Alternative Decompositions \& Approximations with Entropic-OT}
\label{app:entropic_ot}

In this section, we give additional details on Section~\ref{sec:entropic_OT}. Our earlier reductions operate in the space of probability measures $\mathcal{P}(\mathbb{R}^d)$ endowed with a Wasserstein $\mathcal{W}_2$ geometry. We explore other geometries here. 

As mentioned in the main text, with the grouping \eqref{eqn:entropic_W2}, we can afford a larger time-stepping, i.e., solve between each adjacent pair of observation time $\{t_k\}_{k=1}^K$ -- this is different from the $o(h^2)$ approximation error we incur when using $\mathcal{W}_2$. For reversible Brownian motion $W$ as reference, it becomes iteratively solving between $2$ observations:
\begin{equation}
\label{eqn:eOT_GF_1}
q_{t_{i+1}} \leftarrow \arg\min_{q\in \mathcal{P}(\mathbb{R}^d)}\;  \mathcal{W}_{2,t_{i+1}-t_i}^2(q,q_{t_i})+\underbrace{2\int J_{t_{i+1}}(x) q(x) dx}_{\bar{F}(q)}
\end{equation}
with entropy-regularized quadratic $\mathcal{W}_2$ distance
\[\mathcal{W}_{2,t_{i+1}-t_i}^2(q,q_{t_i}) = \min_{\pi^i \in \Pi(q_{t_i}, q)}\int \frac{1}{2(t_{i+1}-t_i)}\|x-y\|^2d\pi^i(x,y)+\text{KL}(\pi^i\Vert q_{t_i}\otimes q)\]
as the distance-measuring metric and the linear likelihood term as the objective functional $\bar{F}$, which is arguably simpler than that based on \eqref{eqn:WGF}
\[q_{t+h}\leftarrow \arg\min_{q\in \mathcal{P}(\mathbb{R}^d)}\; \frac{1}{2h} \mathcal{W}_2^2(q,q_{t})+h\cdot \underbrace{\int  \frac{1}{2}\|\nabla \log \frac{q}{\nu}(x)\|^2 q(x) dx +2\int J_{t+h}(x) q(x) dx}_{F(q)}\]
% $2\int J(x) dq(x)+ \int \log q_T dq_T$ entropy 
% \[\frac{1}{2h} \left(h\int_0^h \int_{\mathbb{R}^d}  q_t(x)\|v_t(x)\|^2 dxdt \right)+h\cdot \int  \frac{1}{2}\|\nabla \log \frac{q_h}{\nu}(x_h)\|^2 q_h(x_h) dx +2\int J_h(x_h) q_h(x_h) dx\]
involving $I(q;\nu)$. 
% since in between the solution is given by entropic interpolation

\paragraph{Eulerian Method} Using ideas from \cite{peyre2015entropic} gives a proximal splitting first order method in low dimension for \eqref{eqn:eOT_GF_1}. It results in a sequence of Sinkhorn-like updates on the dual potentials. But being confined to a fixed grid, such methods will run into scalability problem in high dimension. % In the setting when $J_t$ is sparse, most of the steps are  (recall $\mathcal{W}_{2,\epsilon}^2$ is biased therefore won't return the previous iterate $q_{t_i}$ as solution)

 % The iterates \eqref{eqn:eOT_GF} with regularized-OT distance, although having a simpler objective, does not admit push-forward map solution or modeling as PDE dynamics. Hence it's not clear if there are simple Lagrangian methods for implementing the $\{\rho_t\}_t$ dynamics in high dimension. 
 
\paragraph{Lagrangian Method} One candidate here for implementing e.g., $t=t_{i}\rightarrow t_{i+1}$ update in \eqref{eqn:eOT_GF_1} in high dimension can be: 
%
% parametrize $\rho(x)\colon\mathbb{R}^d \rightarrow \mathbb{R}$ with NN directly gives on the density level
% \[\rho(x) \leftarrow \rho(x) - \eta\cdot\left(\frac{1}{h}\phi^\rho(x)-\frac{\Delta\sqrt{\rho(x)}}{\sqrt{\rho(x)}} + J_t(x) \right)\]
%
let $\hat{q}_{t_i}=\frac{1}{n}\sum_{j=1}^n \delta_{x_j^{t_i}}$ be given as $n$ particles distributed as $q_{t_i}$, since the first variation of $q\mapsto \mathcal{W}_{2,t_{i+1}-t_{i}}^2(q,q_{t_{i}})$ is the optimal dual potential $ \varphi$ for the coupling, for iteration $k\geq 1$ perform
\begin{equation}
\begin{aligned}
\label{eqn:entropic_IPS}
x_j^{t_{i+1}}[k+1]&\leftarrow x_j^{t_{i+1}}[k] - \eta\cdot\left(\frac{1}{t_{i+1}-t_{i}}\nabla\varphi^{\hat{q}_{t_{i+1}}^k}(x_j^{t_{i+1}}[k]) + 2\nabla J_{t_{i+1}}(x_j^{t_{i+1}}[k]) \right), \quad j\in[n]\\
\hat{q}_{t_{i+1}}^k &= \frac{1}{n}\sum_{j=1}^n \delta_{x_j^{t_{i+1}}[k]}\; \text{ with }\; \hat{q}_{t_{i+1}}^1 = \hat{q}_{t_i}\, .
\end{aligned}
\end{equation}
In the above we expect $\hat{q}_{t_{i+1}}^k\rightarrow q_{t_{i+1}}$, the optimal solution of \eqref{eqn:eOT_GF_1}, as $k \rightarrow \infty$. To get $\varphi^{\hat{q}}$ needed in \eqref{eqn:entropic_IPS}, one can solve an empirical Entropy-regularized OT problem, using its dual formulation: the optimal dual potential $\varphi^{\hat{q}_{t_{i+1}}^k}$ for the marginal $\hat{q}_{t_{i+1}}^k$ (with the other being fixed at $\hat{q}_{t_i}$) solves
\[\arg\max_{\varphi,\psi}\; \sum_{j=1}^n \varphi(X_j)+\psi(Y_j)-(t_{i+1}-t_{i}) \cdot \exp\left(\frac{\varphi(X_j)+\psi(Y_j)-\frac{\|X_j-Y_j\|^2}{2}}{t_{i+1}-t_{i}}\right), \quad \text{for } X_j \sim \hat{q}_{t_{i+1}}^k, Y_j\sim \hat{q}_{t_i}\]
From this we take the optimal $\{\psi^*(Y_j)\}_{j=1}^n$, and set 
\[\varphi^*(x)=-(t_{i+1}-t_{i})\log\left(\frac{1}{n}\sum_{j=1}^n e^{\frac{\psi^*(Y_j)-\|x-Y_j\|^2/2}{t_{i+1}-t_{i}}}\right)\]
to be the soft $c$-transform, after which we can take its derivative to get (with this expression $x\in \mathbb{R}^d$ can be evaluated anywhere)
\[\nabla\varphi^{\hat{q}_{t_{i+1}}^k}(x) = \nabla \varphi^*(x) = \frac{\sum_{j=1}^n (x-Y_j) e^{\frac{\psi^*(Y_j)-\|x-Y_j\|^2/2}{t_{i+1}-t_{i}}}}{\sum_{j=1}^n e^{\frac{\psi^*(Y_j)-\|x-Y_j\|^2/2}{t_{i+1}-t_{i}}}}\]
in order to iterate the algorithm \eqref{eqn:entropic_IPS}. It is known that the optimal potentials $\varphi^*,\psi^*$ are related through soft $c$-transforms at optimality.

The procedure described above requires evolving $n$ interacting particles over the iteration $k \geq 1$ and between each pair of time $(t_{i},t_{i+1})\in[T]$. Each update $k$ requires running the Sinkhorn algorithm to find the optimal dual potential $\{\psi^*(Y_j)\}_{j=1}^n$ for the updated empirical density $\hat{q}_{t_{i+1}}^k$. Repeatedly running such subroutines may not be cheap computationally.  % Entropic Action Matching can be used to recover the drift in the SDE in \eqref{eqn:BB}. 

But as already commented in Remark~\ref{rmk:entropic_OT}, the more serious concern here is that decomposition \eqref{eqn:eOT_GF_1} cannot be generalized when we don't know the transition kernel of the reference process -- the cost function is fully determined by it.

\paragraph{Multi-marginal Schr\"odinger Bridge}
%\label{rmk:multi-marginal-SB}
More generally, using \cite[Lemma 3.4]{benamou2019entropy} on the $\text{KL}(\mathbb{Q}^\theta\Vert\mathbb{P}^{\text{ref}})$ term in the second equation of \eqref{eqn:loss_path}: 
\[\arg\min_{\mathbb{Q}^\theta:\mathbb{Q}^\theta_0=\rho_0}\;\; \mathbb{E}_{\mathbb{Q}^\theta}[J(x;y)]+\text{KL}(\mathbb{Q}^\theta\Vert\mathbb{P}^{\text{ref}})\, ,\]
we have that the objective over $\mathcal{P}(\mathcal{C}([0, T], \mathbb{R}^d))$ above can equivalently be written as %(assuming $\rho_{t_1}$ fixed)
\begin{align*}
&\arg\min_{\{q_{t_i}\}_i}\;\; \sum_{i=1}^K \int J_{t_i}(x) q_{t_i}(x) dx  + \sum_{i=1}^{K-1} \text{KL}(q_{t_i,t_{i+1}}\Vert R_{t_i,t_{i+1}})-\sum_{i=2}^{K-1}\text{KL}(q_{t_i}\Vert R_{t_i})\\
&\Leftrightarrow \arg\min_{\{q_{t_i}\}_i}\;\; \sum_{i=1}^K \int J_{t_i}(x) q_{t_i}(x) dx  + \sum_{i=1}^{K-1} \text{KL}(\pi^{*}(q_{t_i},q_{t_{i+1}})\Vert R_{t_i,t_{i+1}}q_{t_i}\otimes q_{t_{i+1}})+H(q_{t_i})+H(q_{t_{i+1}})\\
& -\sum_{i=2}^{K-1}H(q_{t_i}) + \sum_{i=2}^{K-1}\mathbb{E}_{q_{t_i}}[\log\nu]\\
& \Leftrightarrow \arg\min_{\{q_{t_i}\}_i}\;\; \sum_{i=1}^K \int J_{t_i}(x) q_{t_i}(x) dx  \quad\quad (\text{assuming Brownian motion reference})\\ 
&+\sum_{i=1}^{K-1} \underbrace{\min_{\pi^i \in \Pi(q_{t_i}, q_{t_{i+1}})}\int \frac{1}{2(t_{i+1}-t_i)}\|y-x\|^2d\pi^i(x,y)+\text{KL}(\pi^i\Vert q_{t_i}\otimes q_{t_{i+1}})}_{\text{SB}(q_{t_i},q_{t_{i+1}})}+\sum_{i=1}^K H(q_{t_i})\, .
\end{align*}
This implies that the posterior sampling problem we consider can be exactly reduced to a multi-marginal SB over the $K$ observation times (i.e., joint minimization over $\{q_{t_i}\}_{i=1}^K$), with additional parts of the loss involving simple functionals on a single marginal only. However, beyond the simple cases of e.g., Ornstein–Uhlenbeck or Brownian motion, it is not possible to specify the cost in $\text{SB}(q_{t_i},q_{t_{i+1}})$ (transition probability density of the reference measure unknown generally). Such approach, in the case of the Wiener reference, is adopted in \cite[Theorem 3.1]{chizat2022trajectory} where the authors propose a mean-field Langevin algorithm to solve this reduced problem over the joint $\mathcal{P}(\mathbb{R}^d)^K$ space, after which one samples from the conditional Brownian bridge between each neighboring pair to reconstruct the entire trajectory. Such a simulation is also challenging for the non-linear reference process that we consider. 
%\end{remark}

% \qijia{Check \url{https://arxiv.org/pdf/1810.02733}}
% Comment on terminal $\rho_T$ update.

\newpage 
\section{Pseudocode for Algorithms}
\label{app:pseudocode}

The algorithm we use for learning the SDE drift $\{b_s(X_t,t)\}_{s\in [0,1]}$ from Section~\ref{sec:track_b_transport} is provided below. This is based on controlled equilibrium dynamics since it seeks to impose the marginal density $(\pi_s)_{s\in [0,1]}$ exactly. % and updating the corresponding SDE trajectories 

\begin{algorithm}[htbp] % h!
\caption{Controlled transport from prior $\exp(-I(x))$ to posterior $\exp(-I(x)-J(x))$}
\label{alg:transport_b}
	\begin{algorithmic}[1]
		\STATE \textbf{Input:} Initial samples from prior $\{X_t\}_{t\leq T}\in\mathbb{R}^{d\times T/\delta_t}\sim \pi_0\propto \exp(-I(x))$ with $X_0\sim\rho_0$, potential/reference drift $-\nabla V$, likelihood $J$ over path, ensemble size $K$, anneal stepsize $\delta_s$
        \STATE \textbf{Input:} NN parameterizing $\{\partial b_t^s/\partial s\}_t$: $\phi^\theta(x_t^s,t,s)\colon\mathbb{R}^d\times [0,T]\times [0,1]\mapsto\mathbb{R}^d$ 
		\STATE Set $A_0 = 0, b_0(\cdot)=-\nabla V(\cdot)$
		\FOR{ $s = 0$ \TO $s = 1$ }
        \STATE Simulate until time $T$ for $K$ independent walkers:
        \begin{equation}
        \label{eqn:em}
        X_{t+1}= X_t+\delta_t \cdot b_s(X_t,t) +\sqrt{2\delta_t}\cdot \zeta_t, \;X_0\sim\rho_0, \zeta\sim\mathcal{N}(0,I)  % -\delta_t \cdot \nabla V(X_t)
        \end{equation}
        \textit{Optional:} Anneal w.r.t $\pi_s(X) \propto\exp(-I(X)-sJ(X))$ using SPDE \eqref{eqn:langevin-spde}-\eqref{eqn:spde_general} for a few steps, starting from trajectories got from \eqref{eqn:em}, output $\{X_t\}_{t\leq T}$ with $X_0\sim\rho_0$  
		\STATE Compute quantities evaluated on the trajectory
        \[\bar{J}^s=\frac{1}{K}\sum_{k=1}^K J(\{X_t^k\}_{t\leq T})\]
        \[h_\theta^s(\{X_t^k\}_{t\leq T})=\sum_{t=0}^{T/\delta_t}-\frac{\delta_t}{2}\cdot \left(b_s(X_t^k,t)-\frac{X_{t+\delta_t}^k-X_t^k}{\delta_t}\right)^\top \phi^\theta(X_t^k,t,s)-\frac{\delta_t}{4}\nabla\cdot \phi^\theta(X_t^k,t,s)\]
        \[\bar{h}_\theta^s = \frac{1}{K}\sum_{k=1}^K h_\theta^s(\{X_t^k\}_{t\leq T})\]
        Solve for 
    \[\phi^\theta \leftarrow \min_\theta \quad  \frac{1}{K}\sum_{k=1}^K \left(h_\theta^s(\{X_t^k\}_{t\leq T})-\bar{h}_\theta^s+J(\{X_t^k\}_{t\leq T})-\bar{J}^s\right)^2\]
    \STATE Update the drift (can be evaluated any $X_t,t$)
    \[b_{s+\delta s}(\cdot) \approx b_s(\cdot )+\phi^\theta(\cdot, s)\cdot \delta s\]
        \STATE Calculate 
        \[A_{s+\delta_s}=A_s-\delta s \cdot \bar{J}^s\]
		\ENDFOR
		\RETURN $ \exp(A_1)\approx Z_1/Z_0$ as normalizing constant estimator for $Q(x)$ and $b_1(\cdot)$
	\end{algorithmic}	
\end{algorithm}

%\newpage
 We use a particular implicit discretization scheme for the $s$ variable advocated in \cite{beskos2008mcmc} below for the non-equilibrium SPDE dynamics from Section~\ref{sec:Non-Equilibrium}. This is an extension of Jarzynski-style result to path space -- the algorithm, for simplicity, is written for TPS in this case. The additional Metropolis-Hastings adjustment can be used to eliminate the time-step error, as shown in Lemma~\ref{lem:Jarzynski}.

 %This may be seen as a way to eliminate the time-step error in the estimate of the free energy diﬀerence upon resorting to a time-inhomogeneous Metropolis-Hastings dynamics, instead of plain discretizations of the underlying diﬀusion process.

\begin{algorithm}[htbp] % h!
\caption{Annealed SPDE from Brownian Bridge + MH adjustment for TPS with weighting (built on \cite{beskos2008mcmc} 
 with path $x(t,s)$ for $(s,t)\in (0,\infty)\times [0,T]$)}
\label{alg:spde}
	\begin{algorithmic}[1]
		\STATE \textbf{Input:} Initial Brownian bridge path $x(t,0)\in\mathbb{R}^{d\times T/\delta_t}\sim \pi_0$ with $x(0,0)=A, x(T,0)=B$, potential/reference drift $-\nabla V$, ensemble size $K$, anneal stepsize $\delta_s$, physical time stepsize $\delta_t$
		\STATE Set $A_0 = 0$
		\FOR{ $s = 0$ \TO $s = 1$ }
        \STATE Set the current potential to be $u=-s\nabla V$
		\STATE Using the discretization in physical time: for $A$ the discrete Laplacian matrix
        \[A = \frac{1}{\delta_t^2}
        \begin{bmatrix}
        -2& 1& & & & \\
         1 &-2 & 1 & & &\\ &&&\ddots &&\\ 
        && & 1 & -2 & 1\\
        &&&& 1 & -2
        \end{bmatrix}\]
        \[\frac{x(t,s)}{ds} = \frac{1}{2} Ax(t,s)-\frac{1}{2}u(x)\nabla u(x)-\frac{1}{2}\nabla(\nabla\cdot u(x))+\sqrt{\frac{2}{\delta t}}\frac{\partial W}{\partial s}\]
	We update with discretization in algorithmic time as (for $\zeta\sim\mathcal{N}(0,I)$):
        \begin{align*}
        \underbrace{(I-\frac{1}{4}\delta_s A)}_{L} x(t,s+\delta_s) &= \underbrace{(I+\frac{1}{4}\delta_s A)}_{R} x(t,s)-\underbrace{\frac{1}{2}u(x(t,s))\nabla u(x(t,s))\delta_s-\frac{1}{2}\nabla(\nabla\cdot u(x(t,s)))\delta_s}_{M(x(t,s))\delta_s}\\
        &+\sqrt{\frac{2\delta_s}{\delta_t}}\zeta
        \end{align*}
        with boundary condition $x(0,s+\delta_s)=A, x(T,s+\delta_s)=B$.
        % \[Lx(t,s+\delta_s)=Rx(t,s)-M(x(t,s))\delta_s+\sqrt{\frac{2\delta_s}{\delta_t}}\zeta\]
        \STATE MH adjustment: compute transition density
        \[q(s\rightarrow s+\delta_s)\propto\exp\left(-\frac{\delta_t}{4\delta_s}\|Lx(t,s+\delta_s)-Rx(t,s)+M(x(t,s))\delta_s\|^2\right)\]
        Accept with probability (otherwise stay at $x(t,s)$, i.e., $x(t,s+\delta s)=x(t,s)$)
        \[\alpha = \min\left(1,\frac{\pi(x(t,s+\delta_s))q(x(t,s+\delta_s)\rightarrow x(t,s))}{\pi(x(t,s)) q(x(t,s)\rightarrow x(t,s+\delta_s))}\right)\]
        where $\pi(x(t,s)) \propto $
        \[\exp\Bigg(-s\cdot\underbrace{\sum_{t=0}^{T/\delta t}\left(\frac{1}{4}\|-\nabla V(x(t,s))\|^2+\frac{1}{2}\nabla\cdot -\nabla V(x(t,s))\right)\delta_t}_{\tilde{J}(x)}+\underbrace{\frac{\delta t}{4} \sum_{i=1}^{d} \left(x(t,s)_i^\top A x(t,s)_i\right)}_{-\tilde{I}(x)}\Bigg)\]
        for $x(0,s)=A, x(T,s)=B$. The first part is discretization of \eqref{eqn:TPS_anneal}. Second part for \eqref{eqn:prior_bb}.
        \STATE Calculate weight
        \[A_{s+\delta_s}=A_s-\delta s \cdot \tilde{J}(x(t,s+\delta_s))\]
		\ENDFOR
        \STATE Repeat Line 2 - Line 8 for $K$ times to get $K$ independent estimator $A_1^1,A_1^2,\dots, A_1^K$
		\RETURN $\frac{1}{K}\sum_{k=1}^K \exp(A_1^k)\approx Z_1/Z_0$ as normalizing constant estimator for $Q(x)$
	\end{algorithmic}	
\end{algorithm}

% For the last KL term in \eqref{eqn:simplified_objective}, we can simply estimate with
% \[\int q_T(x_T) \log\frac{q_T}{\nu}(x_T) dx_T \approx \frac{1}{n}\sum_{i=1}^n \log \frac{1/n}{\nu(x_i)} \]
% for samples $\{x_i\}_{i=1}^n \sim q_T$.

\newpage
The Lagrangian method from Section~\ref{sec:push_forward_map} for approximately following the Wasserstein dynamics is parametrized by a sequence of push-forward maps, and outlined below. The randomized $I(q;\nu)$ estimator here is built with Lemma~\ref{lem:FI_estimator}. Note that the algorithm only requires knowledge of $\nabla V, \nu$.

\begin{algorithm}[htbp] % h!
\caption{JKO with pushforward map $\nabla \phi^\theta$ for solving \eqref{eqn:simplified_objective} based on the proposal \eqref{eqn:jko_map}}
\label{alg:jko}
	\begin{algorithmic}[1]
		\STATE \textbf{Input:} Initial samples $\{X_0^i\}_{i=1}^n \in\mathbb{R}^{d}\sim  \rho_0$, potential/reference drift $-\nabla V$ with stationary distribution $\nu=\rho_0=q_0$, likelihood $J_t$ over path, stepsize $h$, small $\sigma^2, m$
        \STATE \textbf{Input:} Input-convex NN parameterizing pushforward map $\phi^\theta(x_t,t)$: $\mathbb{R}^d\times [0,T]\mapsto\mathbb{R}$ 
		\FOR{ $t = 0$ \TO $t = T$ }
%        \STATE Sample $X_t^j \sim \text{Unif}(X_t^1,\dots, X_t^n)$ for $n$ times. For each $X_t^j$, sample from $Y_t^j\sim \mathcal{N}(X_t^j,\sigma^2 I)$.
\STATE Given current $\phi^\theta(\cdot, t)$, and samples $\{X_t^i\}_{i=1}^n \sim q_t$, let $X_{t+h}^i = \nabla \phi^\theta (X_t^i,t)$
         \STATE Compute the following quantity on the $n$ particles:
     \[\mathcal{L}^\theta(\{X_{t+h}^i\})=\frac{1}{n}\sum_{i=1}^n \left(\Delta\log \nu(X_{t+h}^i)+\frac{1}{2}\|\nabla V(X_{t+h}^i)\|^2\right)\]  % -\delta_t \cdot \nabla V(X_t)
      With the current samples $\{X_{t+h}^i\}_{i=1}^n$ generated from $\phi^\theta(\cdot, t)$, draw $Y_i^j\sim \mathcal{N}(X_{t+h}^i,\sigma^2 I)$ for $j\in[m]$ times and compute
            %solve for $n$ separable problems ($\zeta_j\sim\mathcal{N}(0,\sigma^2 I)$):
        \[R^\theta(X_{t+h}^i)=  \frac{1}{m}\sum_{j=1}^m\frac{\|Y_i^j-X_{t+h}^i\|^2}{2\sigma^4} \]
        % \sup_{(\alpha_j,\beta_j) \in \mathbb{R}\times \mathbb{R}^d\in \mathcal{K}}\;\;  \left(\frac{\nabla\phi^\theta(X_t^j)+\zeta_j-\nabla\phi^\theta(X_t^j)}{\sigma^2}\right)^\top \beta_j+\alpha_j
		\STATE Solve for 
    \begin{equation}
    \label{eqn:loss_jko}
    \phi^\theta(\cdot,t) \leftarrow \arg\min_\theta \; \frac{1}{2h^2}\frac{1}{n}\sum_{i=1}^n \left(X_{t+h}^i-X_t^i\right)^2+\frac{1}{n}\sum_{i=1}^n \left[ R^\theta(X_{t+h}^i)+ \frac{2}{h} J_{t+h}(X_{t+h}^i)\right]+ \mathcal{L}^\theta(\{X_{t+h}^i\})
    \end{equation}
     \IF {$t=T-h$}
\STATE Add additional term $\frac{1}{n h}\sum_{i=1}^n \log \frac{1/n }{\nu(X_T^i)}$ to the loss \eqref{eqn:loss_jko} with samples $X_T^i =\nabla \phi_{T-h}^\theta(X_{T-h}^i)$
    \ENDIF 
    \STATE Update the $n$ particles with the output $\phi_t^\theta$ from \eqref{eqn:loss_jko}:
    \[X_{t+h}^i = \nabla \phi^\theta (X_t^i,t) \;\; \text{so}\;\; \{X_{t+h}^i\}_{i=1}^n\sim q_{t+h} \]
		\ENDFOR
		\RETURN $\nabla \phi^\theta_t(\cdot)$ for all $t\in[0,T]$
	\end{algorithmic}	
\end{algorithm}

\clearpage
\section{Numerics}
\label{sec:numerics}

% \qijia{TODOs:
% \begin{itemize}
% \item Both the case of Brownian motion and OU as reference for TPS have analytical answers (\eqref{eqn:BB_SDE} and below) that we can compare against.
% \item A lower-d ($\leq 10$) double-well TPS (annealing from both prior SDE and Brownian bridge), with our NN annealing+SPDE-in-between approach (i.e., Algorithm \ref{alg:transport_b}) and report normalizing constant, possibly with other statistics
% \item  Compare the performance of this TPS example with non-equilibrium dynamics (i.e, Algorithm \ref{alg:spde}). The only additional part we'll have to add is MH and then accumulate weight in line 7, but other that the SPDE part should be similar
% \item A general Bayesian posterior example (stationary OU as reference) with Algorithm \ref{alg:jko} for multiple observations (one at time $t=T$, can skip line 7-9 in this case)
% \end{itemize}
% In the case of OU, Doob's $h$-transform gives the optimal end-point conditioned SDE
% \[dX_t= [-\beta X_t + 2\nabla \log h(t,X_t)] dt+\sqrt{2}dW_t, X_0\sim\mathcal{N}(0,1/\beta)\]
% with 
% \[\nabla \log h(t,X_t) = \beta\frac{\exp(-\beta(T-t))}{1-\exp(-2\beta(T-t))}(B-X_t\exp(-\beta(T-t)))\]
% for hitting $X_T=B$ initializing at some $X_0\sim \rho_0$.
% }

To test the efficacy of the proposed schemes in practice, we conduct a series of numerical experiments. Our implementation is in Python, using the JAX library. \footnote{A link to a repository can be found here: \url{https://anonymous.4open.science/r/path_sampling-F861/README.md}}

\subsection{Implementation of Algorithms from Section~\ref{sec:controlled_path_measure}}

We parameterize $\frac{\partial b}{\partial s}$ as a neural net. Concretely, it is chosen to be a MLP with two hidden layers of dimension $20$ and $30$, and ReLU non-linearities. Recalling that at a fixed $s$, $\frac{\partial b}{\partial s}$ takes input $(X_t,t)$ of size $\mathbb{R}^d \times [0,T]$, in the experiments below we train a new MLP at each iteration $s\in [0,1]$ of the algorithm, and use an ADAM optimizer with a learning rate $10^{-3}$ to minimize the specified loss.

%the simplest approach we consider is to encode the input of the MLP as an array of length $d + 1$ representing the position $X_t$ and the time step $t$, in which case each iteration of the algorithm (steps 4-9 of Algorithm \ref{alg:transport_b}) initializes and trains a new MLP. We conjecture that information ought to be shared between iterations, and consider a neural net parametrized also by $s \in [0, 1]$, in which case the same NN is updated at each iteration. TODO: clarify which you use

\subsubsection{Brownian bridge posterior}

We begin with the simple case of sampling from a Brownian bridge pinned at two points $x(0)=-2$ and $x(1)=+2$. Using Algorithm~\ref{alg:transport_b}, we consider transport from the prior $\exp(-I(x))$, set up as a Brownian motion initialized at $x(0)=-2$, i.e., $u^{\text{ref}}=0$. By gradually introducing the endpoint constraint, we design a sequence of measures $\pi_s(x)\propto \exp(-I(x)-sJ(x))$, where $J$ takes the form of $\frac{1}{2\sigma^2}\|x(1)-2\|^2$ with $\sigma=0.1$. As $s$ goes from $0$ to $1$ in $10$ increments of $0.1$, one gradually moves towards the desired Brownian bridge posterior. 

An example of some resulting samples of the posterior Brownian bridge paths are shown in Figure \ref{fig:brownian_bridge} and compared to exact samples simulated from the Brownian bridge SDE \eqref{eqn:BB_SDE} (which we treat as ground truth). In the experiment below, physical time is evolved in increments of $dt=0.01$. At each iteration, the NN is trained for $200$ steps, with $K=500$ paths sampled.

\begin{figure}[H]
    \centering
    \includegraphics[width=0.45\linewidth]{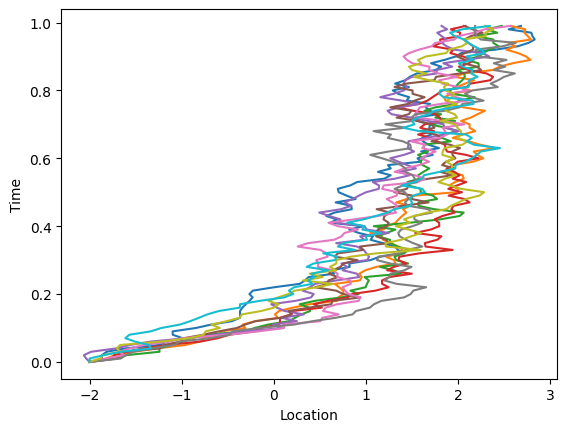}
    \includegraphics[width=0.45\linewidth]{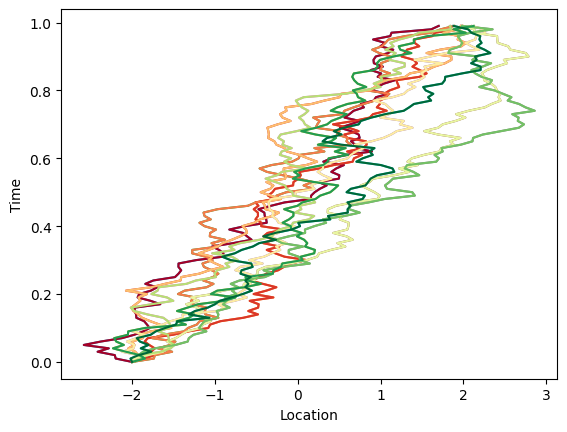}
    \caption{Left: samples from Algorithm~\ref{alg:transport_b} at $s=1$. Right: exact samples from a Brownian bridge SDE.}
\label{fig:brownian_bridge}
\end{figure}

\subsubsection{Transition path sampling}

We next consider a TPS problem in a double well potential $V(x)=5(x^2-1)^2$ with start and end states $A=-1$ and $B=1$ respectively between time $t\in [0,1]$. We conducted two sets of experiments, using two different annealing paths, both based on Algorithm~\ref{alg:transport_b}.

%In this case, we recall that two strategies for applying Algorithm~\ref{alg:transport_b} are available, depending on our choice of $I$ and $J$. The first is described by %TODO: @qijia do we actually describe the first version in the main text?

In the first experiment, we initialize with samples from a double-well potential, treated as the prior $\exp(-I(x))$, and gradually add in the endpoint $B=1$ constraint through $J(x) = \frac{1}{2\sigma^2}\|x(1)-1\|^2$ with $\sigma=0.1$. %, i.e., we relax the delta function constraint on the endpoint $B=1$ to a Gaussian with a standard deviation 
%Results are shown in Figure~\ref{urefprior}. 

In the second experiment, as suggested by the choice of $I$ and $J$ in \eqref{eqn:TPS_anneal}-\eqref{eqn:prior_bb}, samples are initially drawn from a Brownian bridge (therefore the endpoints are already pinned at $A,B$), and we progressively add in the double-well potential to make sure the trajectory probabilities are weighted correctly. Specifically, we expect the trajectories moving fast in between the low density region between the wells to have higher probability under the posterior. Result are shown below in Figure~\ref{urefprior}.

In both cases, $s$ is varied between $[0,1]$ in increment of $\delta_s=0.1$, i.e., $10$ annealing steps. Physical time is evolved with stepsize $dt=0.01$ with $T=1$ as the trajectory length. At each iteration, the NN is trained for $250$ steps in the first experiment and $1000$ in the second, with $K=500$ as the ensemble size in the first experiment, and $K=250$ in the second.

\begin{figure}[H]
    \centering
    \includegraphics[width=0.45\linewidth]{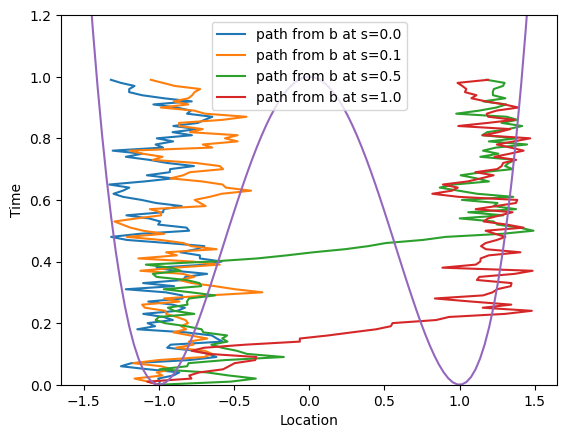}
    \includegraphics[width=0.45\linewidth]{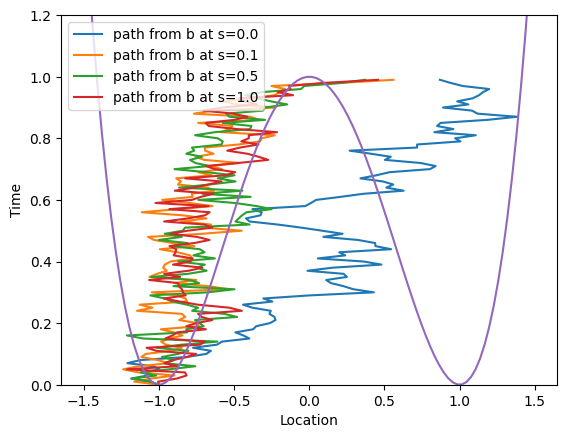}

    \caption{TPS example with paths shown at $s=0.1$, $s=0.5$ and $s=1.0$. Left: evolution from the prior double-well SDE to the posterior with a (soft) constraint at $B=1$. Right: evolution from the Brownian bridge to the posterior. Potential (purple) is scaled for convenience.  MCMC is not used.
    % Left: evolution from the prior double-well SDE to the posterior with a (soft) constraint at $B=1$. Right: evolution from the Brownian bridge to the posterior. Potential (purple) is scaled for convenience. 
    }
    \label{urefprior}
\end{figure}

% Figure \ref{todo} shows a 1 dimensional well described by $V(x)=5(x^2-1)^2$.

We see that when transporting from a Brownian bridge prior, even the paths at small $s$ cross the barrier, while in the other case, the crossing of the barrier happen only when $s$ is sufficiently large.

% We first consider transporting from $u^{\text{ref}}$ (i.e., the prior SDE), as described in \cref{sec:track_b_transport}. 

The left plot in Figure~\ref{urefprior} captures both the constraint and the transition probability very well. With the right plot transporting from the Brownian bridge, we explore the option below with SPDE MCMC turned on between consecutive $s$ updates (c.f. Figure~\ref{fig:spde}). %, we report the result .
% We find that annealing with MCMC is unnecessary. 

\begin{figure}[H]
    \centering   
    \includegraphics[width=0.45\linewidth]{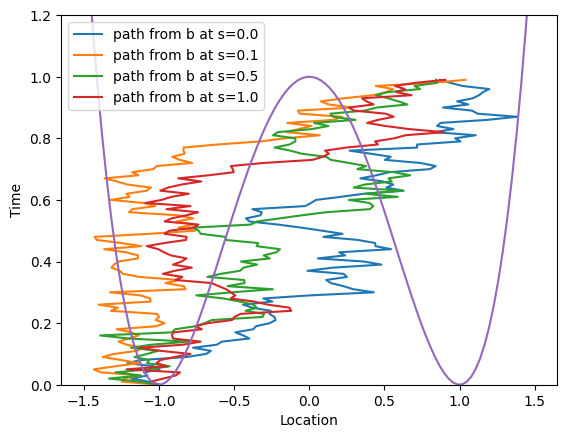}
    \caption{TPS example with paths shown at $s=0.1$, $s=0.5$ and $s=1.0$. Evolution from the Brownian bridge to the posterior with MCMC in between. Potential (purple) is scaled for convenience.}
    \label{fig:spde}
\end{figure}

% Results are shown in figure \ref{urefprior}.

% caption: 
% hyperparams: number of walkers is $K=200$. Number of training steps is $5000$. $T=0.1$, and $\delta t = 0.001$. Without step 7 of \ref{alg:transport_b} vs. with.

% As expected, for $\sigma$ small (e.g. $0.001$), this approach struggles, because the annealing merely ameliorates the strength of the $B=1$ constraint. We now consider the second method discussed in \cref{sec:track_b_transport}, namely using a Brownian bridge as a prior, so that the constraints on $A$ and $B$ are already satisfied by the prior, and annealing adds the effect of the potential on the dynamics.

\subsubsection{Non-equilibrium dynamics}
For the same TPS example as above, we compare what we observe with non-equilibrium dynamics based on Algorithm~\ref{alg:spde}. Here the switching is set to happen very slowly with $s\in[0,1]$ in $100$ steps (i.e., $\delta_s=0.01$). MH adjustment is applied on a discretization of the SPDE after each step of the $s$ update. The annealing path is based on transporting the Brownian Bridge to a pinned double-well posterior at $A=-1, B=1$, with ensemble size $K=100$, stepsize $\delta_t = 10^{-3}$.

We show in Figure~\ref{fig:alg2} the path switch as $s\in [0,1]$, as well as a final set of trajectories at $s=1$. We see quite a lot of variability in the final trajectories, which is an indication of the sampler's performance -- high variance reflects that the trajectories are far from equilibrium and dominated by rare events. % The variance of the weights $w_k=e^{-W^{k,1/\delta_s}}=\text{Var}(\exp(A_1^k))$ is $0.052$, 
%\[=\frac{1}{K}\sum_{k=1}^K \exp(A_1^k)\approx Z_1/Z_0,\] 

\begin{figure}
    \centering
    \includegraphics[width=0.45\linewidth]{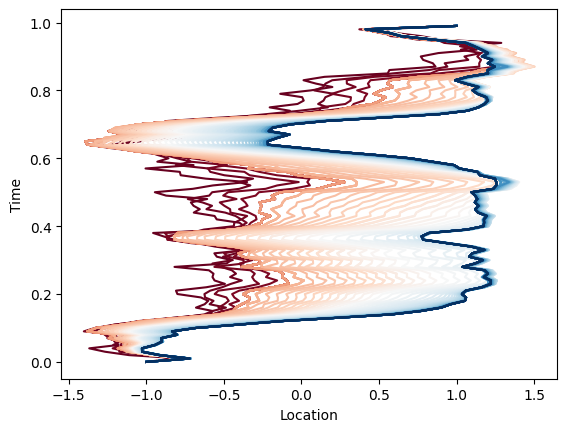}
    \includegraphics[width=0.45\linewidth]{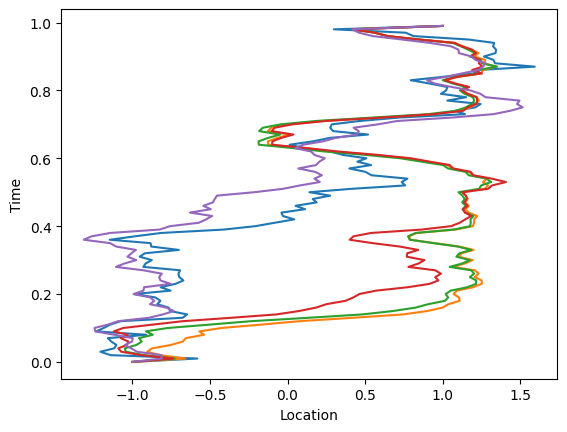}
    \caption{Left: One of the trajectories in the ensemble, at each iteration of the algorithm, going from red ($s=0$) to blue ($s=1$). Right: 5 trajectories from the ensemble at $s=1$.}  % Variance is $0.052$. 
    \label{fig:alg2}
\end{figure}

% \subsection{Double well potential: higher dimensions}

% We include numerical results on double-well potential $V(x)=5(x^2-1)^2$ for TPS with prespecified states $A=-1$ and $B=1$, with 2 different annealing schemes. We start from either $u^{\text{ref}}$ (i.e., the prior SDE) or from the Brownian bridge using the transport method from Section \ref{sec:track_b_transport}.

\subsection{Implementation of Algorithm from Section~\ref{sec:WGF}}

We set up the experiment below so $\nu$ corresponds to the stationary distribution of an Ornstein–Uhlenbeck process
\[dX_t= -\beta X_t  dt+\sqrt{2}dW_t\]
with $\beta=1/4$ -- this is considered to be the prior reference process in this example. Moreover, we initialize the $n=200$ particles as $X_0\sim\mathcal{N}(0,1/\beta)$ at stationarity. The likelihood $J_t$ is chosen so that we impose a soft constraint in the form of $\frac{1}{2\gamma^2}\|x(t)-x_{\text{fixed}}\|^2$ at the midpoint with a big $\gamma$, and the same constraint near the endpoint with a small $\gamma$, at two different pre-specified locations $x_{\text{fixed}}$. 

In the experiment below (Figure~\ref{fig:3dplot}), stepsize $h=0.2$, length of trajectory $T=1$ and the perturbation variance $\sigma^2=0.4^2$ with \#perturbations $m=30$ for the $I(q;\nu)$ estimator. This general Bayesian posterior path example using Algorithm~\ref{alg:jko} is conducted with a $7$-layer MLP parameterizing the pushforward map $\nabla \phi^\theta$. With $x_{\text{fixed}} = -1$ at the midpoint, and $x_{\text{fixed}} = +1$ near the endpoint, the likelihood in this case has the effect of slightly nudging trajectories towards $-1$ in the middle of the path and sharply pin them at $+1$ at the terminal boundary. In between the two observations, we expect the particles to roughly have a Gaussian distribution not too far from stationary. %A simulation is shown in Figure~\ref{fig:3dplot}, where the effect of the perturbation in the middle of the path and the end is clear.

\begin{figure}[H]
    \centering
    \includegraphics[width=0.7\linewidth]{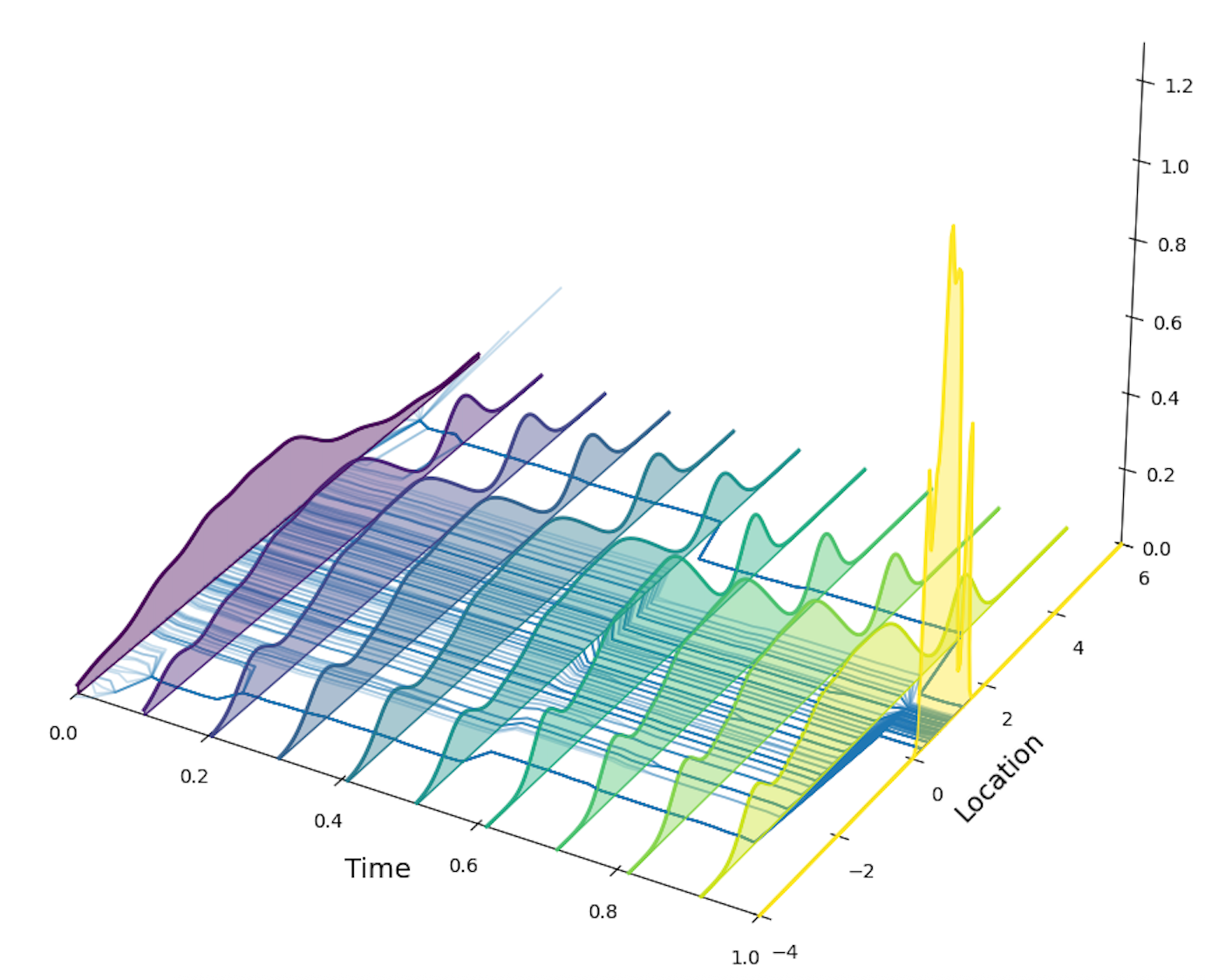}
    \caption{Density evolution of trajectories (reference is set to be an OU process in equilibrium).}
    \label{fig:3dplot}
\end{figure}

% \begin{figure}[H]
%     \centering
%     \includegraphics[width=0.4\linewidth]{double_well_2D.png}
%     \caption{A TPS problem in 2D: the path starts in the [-1,-1] well and finishes in the [1,1] well. Annealing from the SDE prior.}
%     \label{fig:enter-label}
% \end{figure}

% \begin{table}
%   \caption{Sample table title}
%   \label{sample-table}
%   \centering
%   \begin{tabular}{lll}
%     \toprule
%     \multicolumn{2}{c}{Part}                   \\
%     \cmidrule(r){1-2}
%     Name     & Description     & Size ($\mu$m) \\
%     \midrule
%     Dendrite & Input terminal  & $\sim$100     \\
%     Axon     & Output terminal & $\sim$10      \\
%     Soma     & Cell body       & up to $10^6$  \\
%     \bottomrule
%   \end{tabular}
% \end{table}

\end{document}